\newtheorem{lemma}{Lemma}[section]
\newtheorem{thm}{Theorem}[section]
\newtheorem{corollary}{Corollary}[section]
\def\text#1{\mbox{\rm #1}}
\newcommand{\argmin}{\mathop{\rm argmin}}
\title{Exact Exponent in Optimal Rates for Crowdsourcing
}
\author[1]{Chao Gao}
\author[1]{Yu Lu}
\author[2]{Dengyong Zhou}
\affil[1]{
Yale University
}
\affil[2]{
Microsoft Research, Redmond
}
\begin{document}
\maketitle

\begin{abstract}
In many machine learning applications, crowdsourcing has become the primary  means for label collection. In this paper, we study the optimal error rate for aggregating labels provided by a set of non-expert workers. Under the classic Dawid-Skene model, we establish matching upper and lower bounds with an exact exponent $mI(\pi)$ in which  $m$ is the number of workers and $I(\pi)$ the average Chernoff information that characterizes the workers' collective ability. Such an exact characterization of the error exponent allows us to state a precise sample size requirement $m>\frac{1}{I(\pi)}\log\frac{1}{\epsilon}$ in order to achieve an $\epsilon$ misclassification error. In addition,  our results imply the optimality of various EM algorithms for crowdsourcing initialized by consistent estimators.
\smallskip
\end{abstract}

\section{Introduction}

In many machine learning problems such as image classification and speech recognition, we need a large amount of labeled data.
Crowdsourcing provides an efficient while inexpensive way to collect labels. On a commercial crowdsourcing platform like Amazon Mechanical Turk \citep{mturk},   in general, it takes only few hours to obtain hundreds of thousands labels from crowdsourcing workers worldwide, and each label costs only several cents.

Though massive in amount, the crowdsourced labels are usually fairly noisy. The low quality is partially due to the lack of domain expertise from the workers and  presence of spammers. To overcome this issue, a common strategy is to repeatedly label each item by different workers, and then estimate  truth from the redundant labels, for example, using majority voting. Since  the pioneering work by Dawid and Skene \citep{DawSke79},  which jointly estimates truth and workers' abilities via a simple EM algorithm, various approaches have been developed in recent years for aggregating noisy crowdsourced labels. See \cite{WRWBM09,welinder2010nips, RayYuZha10,ghosh2011moderates, bachrach2012icml,LiuPenIhl12,zhoplaby12,DalDasKum2013,zhou2014aggregating, venanzi2014community,parisi2014ranking,tianzhu2015nips} and references therein.

Compared with the active progress in aggregation algorithms, statistical understandings of crowdsourcing do not get much attention except \cite{gao2013minimax, karger14,zhang2014spectral, berend2015finite}. These papers not only show exponential convergence rates for several estimators, they also provide lower bounds to justify the optimality of the rates. However, the exponents found in these work are not matched in their upper and lower bounds. They are optimal only up to some unspecified constants. The main focus of this paper is to find the \emph{exact} error exponent to better guide algorithm design and optimization.

\paragraph{Main Contribution.} We study the minimax rate of misclassification for estimating the truth from crowdsourced labels. We provide upper and lower bounds with exact exponents that match each other. The exponent has a natural interpretation of the collective wisdom of a crowd. In the special case where each worker's ability is modeled by a real number $p_i\in[0,1]$, the exponent takes a simple form $-(1+o(1))mI(p)$ with $I(p)=-\frac{1}{m}\sum_{i=1}^m\log\left(2\sqrt{p_i(1-p_i)}\right)$ being the average R\'{e}nyi divergence of order $1/2$. Therefore, in order to achieve an error of $\epsilon$ in the misclassification proportion, it is necessary and sufficient that the number of workers $m$ satisfies $m\geq (1+o(1))I(p)^{-1}\log(1/\epsilon)$. Note that in previous work, only $m = \Omega\left(I(p)^{-1}\log(1/\epsilon)\right)$ can be claimed. Moreover, our general theorem has implications on the convergence rates of several existing algorithms. \\

This paper is organized as follows. In Section \ref{sec:setting}, we present the problem setting. In Section \ref{sec:main}, given the workers' abilities,  we derive the optimal error exponent. In Section \ref{sec:adapt}, we show  that spectral methods can be used to achieve the optimal error exponent, followed by a  discuss on other algorithms in Section \ref{sec:disc}. The proofs are gathered in Section \ref{sec:proof}.

\section{Problem Setting} \label{sec:setting}

Let us start from the classic model proposed by Dawid and Skene \citep{DawSke79}. Assume there are $m$ workers and $n$ items to label. Denote the true label of the $j$th item by $y_j$ that takes on a value in $[k]=\{1,2,...,k\}$. Let $X_{ij}$ be the label given by the $i$th worker to the $j$th item. The  ability of the $i$th worker is assumed to be fully characterized by a confusion matrix
\begin{equation}
\pi_{gh}^{(i)}=\mathbb{P}(X_{ij}=h|y_j=g), \label{eq:prob}
\end{equation}
which satisfies the probabilistic constraint $\sum_{h=1}^k\pi_{gh}^{(i)}=1$. Given $y_j=g$, $X_{ij}$ is generated by a multinomial distribution with parameter $\pi_{g*}^{(i)}=\left(\pi_{g1}^{(i)},...,\pi_{gk}^{(i)}\right)$. Our goal is to estimate the true labels $y=(y_1, \cdots, y_n)$ using the observed labels $\{X_{ij}\}$. Denote the estimate by $\hat{y}=(\hat{y}_1,...,\hat{y}_n).$   The loss is measured by  the error rate
\begin{equation}
L(\hat{y}, y) = \frac{1}{n}\sum_{j=1}^n\mathbb{I}\{\hat{y}_j\neq y_j\}. \label{eq:loss}
\end{equation}
We would like to remark that the true labels are considered as deterministic here.  It is straightforward to generalize our results to stochastic labels  generated from a distribution.  Also, we assume that every worker has labeled every item. Otherwise, we can regard the missing labels as a new category and the results in this paper stay the same.

\section{Main Results} \label{sec:main}
In this section, we assume the confusion matrices $\{\pi^{(i)}\}$ are known. Our goal is to establish the optimal error rate with respect to the loss in Equation (\ref{eq:loss}).  Let $\mathbb{P}_{\pi,y}$ be the joint probability distribution of the data $\{X_{ij}\}$ given $\pi$ and $y$ specified in (\ref{eq:prob}), and let $\mathbb{E}_{\pi,y}$ be the associated expectation operator. Then the optimality is characterized by
\begin{equation}
\mathcal{M}=\inf_{\hat{y}}\sup_{y\in[k]^n}\mathbb{E}_{\pi,y}L(\hat{y}, y),\label{eq:minimax}
\end{equation}
which identifies the lowest error rate that we can achieve uniformly over all possible true labels.

Our main result of the paper is to show that under some mild condition the minimax risk (\ref{eq:minimax}) converges to zero exponentially fast with an exponent that characterizes the collective wisdom of a crowd. Specifically, the error exponent is $-mI(\pi)$ with
\begin{equation}
I(\pi)=\min_{g\neq h}C(\pi_{g*},\pi_{h*}), \label{eq:exponent}
\end{equation}
where $C(\pi_{g*},\pi_{h*})$ is given as
\[-\min_{0\leq t\leq 1}\frac{1}{m}\sum_{i=1}^m\log\left(\sum_{l=1}^k\left(\pi_{gl}^{(i)}\right)^{1-t}\left(\pi_{hl}^{(i)}\right)^t\right).\]

To better present our main result, let us introduce some notations. Let $\rho_m = \min_{i,g,l} \pi_{gl}^{(i)}$. Suppose the minimum of $C(\pi_{g*},\pi_{h*})$ is achieved at $g=a$ and $h=b$. For any $\alpha>0$, we define a set of workers
$$\mathcal{A}_\alpha = \left\{ i \in [m]: \pi_{aa}^{(i)} \ge (1+\alpha) \pi_{ab}^{(i)}, \pi_{bb}^{(i)} \ge (1+\alpha) \pi_{ba}^{(i)}   \right\}.$$
These workers in $\mathcal{A}_\alpha$ have better expertise in distinguishing between categories $a$ and $b$.  Then,  our main result can be summarized into the following theorem.
\begin{thm}\label{thm:main}
Assume $\log k=o(mI(\pi))$, $| \log \rho_m | = o( \rho_m |\mathcal{A}_{0.01}|^{1/2})$ and $| \log \rho_m | = o(\sqrt{m} I(\pi))$, as $m \to \infty$. Then, we have
\begin{eqnarray*}
\inf_{\hat{y}}\sup_{y\in[k]^n}\mathbb{E}_{\pi,y}L(\hat{y}, y)  = \exp\left(-(1+o(1))mI(\pi)\right),
\end{eqnarray*}
where $I(\pi)$ is defined by (\ref{eq:exponent}).
\end{thm}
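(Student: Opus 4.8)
The plan is to prove the upper and lower bounds separately, since the matching exponent $mI(\pi)$ must be attained from both sides. For the \textbf{upper bound}, I would analyze the maximum-likelihood (equivalently, the oracle Bayes) estimator that, for each item $j$ independently, sets $\hat y_j = \argmax_{g\in[k]} \sum_{i=1}^m \log \pi^{(i)}_{g X_{ij}}$. Since the items are estimated separately and the true labels are deterministic, it suffices to bound $\Prob_{\pi,y}(\hat y_j \neq y_j)$ for a single $j$ with $y_j = g$, and then average. A union bound over the $k-1$ competing labels $h\neq g$ reduces this to pairwise testing errors $\Prob(\sum_i \log \pi^{(i)}_{hX_{ij}} \ge \sum_i \log \pi^{(i)}_{gX_{ij}})$, each of which is controlled by a Chernoff/large-deviation argument: optimizing the exponential moment over $t\in[0,1]$ produces exactly $\exp(-mC(\pi_{g*},\pi_{h*}))$, and taking the min over $h$ gives $\exp(-mI(\pi))$. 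The condition $\log k = o(mI(\pi))$ is precisely what absorbs the union-bound factor $k$ into the $(1+o(1))$ in the exponent, and $|\log\rho_m| = o(\sqrt m\, I(\pi))$ handles the Bernstein-type correction terms coming from the boundedness (or lack thereof) of the log-likelihood increments, whose magnitude is $O(|\log\rho_m|)$.

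For the \textbf{lower bound}, I would restrict the supremum over $y\in[k]^n$ to a hypothesis class that flips a single coordinate between the two hardest labels $a$ and $b$ (the pair achieving the minimum in (\ref{eq:exponent})). For a fixed item $j$, comparing $y_j = a$ versus $y_j = b$ is a binary hypothesis test between two product measures, and by the standard Chernoff lower bound (Neyman–Pearson plus a change-of-measure / Bahadur–Rao refinement) any test incurs error at least $\exp(-(1+o(1))mC(\pi_{a*},\pi_{b*})) = \exp(-(1+o(1))mI(\pi))$. Averaging this per-coordinate lower bound over $j=1,\dots,n$ (e.g. by a Fano- or Assouad-style argument over the product of $n$ independent two-point problems) transfers the single-coordinate bound to the expected loss $L(\hat y, y)$, yielding $\mathcal{M} \ge \exp(-(1+o(1))mI(\pi))$.

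I expect the \textbf{main obstacle} to be the sharp constant in the exponent, i.e. getting the $(1+o(1))$ right rather than just an order-optimal bound. On the upper side this means the Chernoff bound must be shown to be tight up to subexponential factors — naively it only gives $\le \exp(-mC)$, and one must argue the union over $k$ labels and the accumulated slack from non-identical, possibly near-degenerate $\pi^{(i)}$ (hence the role of $\rho_m$ and of the "good" subset $\mathcal A_{0.01}$, whose size controls the variance of the log-likelihood ratio) do not erode the leading exponent. On the lower side the difficulty is the reverse: one needs a Bahadur–Rao–type lower bound on the binary testing error that is valid uniformly over a non-i.i.d.\ array of workers with vanishing entries, and the condition $|\log\rho_m| = o(\rho_m |\mathcal A_{0.01}|^{1/2})$ is exactly the regularity needed to ensure enough informative workers exist so that the central-limit / local-expansion step underlying Bahadur–Rao applies. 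Reconciling these two directions under the single set of stated assumptions is where the technical weight of the proof lies.
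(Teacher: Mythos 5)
Your proposal follows essentially the same route as the paper: the upper bound is the Chernoff/Markov bound on the MLE with a union bound over the $k-1$ competing labels absorbed by $\log k = o(mI(\pi))$, and the lower bound reduces the supremum to an average, then to a binary test on the hardest pair via Neyman--Pearson, and applies exactly the exponential-tilting-plus-CLT (Cram\'er--Chernoff/Bahadur--Rao) argument you describe, with $|\mathcal{A}_{0.01}|$ and $\rho_m$ entering through the Lindeberg condition for the tilted sum. The only small slip is that you attribute $|\log\rho_m| = o(\sqrt{m}\,I(\pi))$ to the upper bound, whereas it is used only in the lower bound to make the window term $e^{-L}$ with $L = 2\sqrt{\mathrm{Var}_{\mathbb{Q}}(S_m)} \le \sqrt{m}\,|\log\rho_m|$ subexponential; the upper bound needs only $\log k = o(mI(\pi))$.
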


In Theorem \ref{thm:main}, the assumption that $| \log \rho_m | = o(\rho_m |\mathcal{A}_{0.01}|^{1/2} )$ can be relaxed to that $| \log \rho_m | = o(\rho_m \alpha |\mathcal{A}_{\alpha}|^{1/2})$ for some $\alpha>0$. To better present our result, we set $\alpha=0.01$ in the theorem. To prove the upper bound, we only need the first assumption $\log k=o(mI(\pi))$. The other two assumptions on $\rho_m$ are used for proving the lower bound. One could imagine that the larger $\rho_m$ is, the more mistake we might make to estimate the true labels. When there is a constant $c$ (independent of $m$) such that $\rho_m \ge c$,  the last two assumptions reduce to $|\mathcal{A}_{0.01}| \to \infty$ and $\sqrt{m} I(\pi) \to \infty$. That means as long as $I(\pi) = \Omega(1/\sqrt{m})$ and the number of experts goes to infinity as $m$ grows, $\exp(-(1+o(1))mI(\pi))$ serves as a valid lower bound.


Theorem \ref{thm:main} characterizes the optimal error rate for estimating the ground truth with crowdsourced labels. It implies $\exp\left(-(1+o(1))mI(\pi)\right)$ is the best error rate that can be achieved by any algorithm. Moreover, it also implies there exists an algorithm that can achieve this optimal rate. The error exponent depends on an important quantity $I(\pi)$. When $m=1$ and $k=2$, this theorem reduces to the Chernoff-Stein Lemma \cite{cover12}, in which $I(\pi)$ is the Chernoff information between probability distributions. For the general problem, $C(\pi_{g*},\pi_{h*})$ can be understood as the
average Chernoff information between $\{\pi_{g*}^{(i)}\}_{i=1}^m$ and $\{\pi_{h*}^{(i)}\}_{i=1}^m$, which measures the collective ability of the $m$ workers to distinguish between items with label $g$ and items with label $h$. Then, $I(\pi)$ is the collective ability of the $m$ workers to distinguish between any two items of different labels. The higher the overall collective ability $mI(\pi)$, the smaller the optimal rate.

By Markov's inequality, Theorem \ref{thm:main} implies
$$\frac{1}{n}\sum_{j=1}^n\mathbb{I}\{\hat{y}_j\neq y_j\}\leq \exp\left(-(1+o(1))mI(\pi)\right),$$
with probability tending to $1$. This allows a precise statement for a sample size requirement to achieve a prescribed error. If it is required that the misclassification proportion is no greater than $\epsilon$, then the number of workers should satisfy $m\geq (1+o(1))\frac{1}{I(\pi)}\log\frac{1}{\epsilon}$. A special case is $\epsilon< n^{-1}$. Since $\frac{1}{n}\sum_{j=1}^n\mathbb{I}\{\hat{y}_j\neq y_j\}$ only takes value in $\{0,n^{-1},2n^{-1},...,1\}$, an error rate smaller than $n^{-1}$ implies that every item is correctly labeled. Therefore, as long as $m> (1+o(1))\frac{1}{I(\pi)}\log n$, the misclassification rate is $0$ with high probability.

When $k=2$, a special case of the general Dawid-Skene model takes the simple form
\begin{equation}
\begin{bmatrix}
\pi_{11}^{(i)} & \pi_{12}^{(i)} \\
\pi_{21}^{(i)} & \pi_{22}^{(i)}
\end{bmatrix}=\begin{bmatrix}
p_i & 1-p_i \\
1-p_i & p_i
\end{bmatrix}.\label{eq:special}
\end{equation}
This is referred to as the one-coin model, because the ability of each worker is parametrized by a biased coin with bias $p_i$. In this special case, $I(\pi)$ takes the following simple form
\begin{equation}
I(\pi)=I(p)=-\frac{1}{m}\sum_{i=1}^m\log\left(2\sqrt{p_i(1-p_i)}\right).\label{eq:one-coin}
\end{equation}
Note that $-2\log\left(2\sqrt{p_i(1-p_i)}\right)$ is the R\'{e}nyi divergence of order $1/2$ between $\text{Bernoulli}(p_i)$ and $\text{Bernoulli}(1-p_i)$. Let us summarize the optimal convergence rate for the one-coin model in the following corollary.

\begin{corollary}\label{cor:one-coin}
Assume $\max_{1\leq i\leq m}(|\log(p_i)|\vee|\log(1-p_i)|)=o(mI(p))$, Then, we have
\begin{eqnarray*}
&& \inf_{\hat{y}}\sup_{y\in\{1,2\}^n}\mathbb{E}_{p,y}L(\hat{y}, y) = \exp\left(-(1+o(1))mI(p)\right),
\end{eqnarray*}
where $I(p)$ is defined by (\ref{eq:one-coin}).
\end{corollary}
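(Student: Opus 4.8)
The plan is to derive Corollary~\ref{cor:one-coin} as the specialization of Theorem~\ref{thm:main} to the two-class one-coin model \eqref{eq:special}. Concretely I would do two things: first, evaluate the error exponent $I(\pi)$ of \eqref{eq:exponent} in closed form and show it equals $I(p)$ of \eqref{eq:one-coin}; second, check the three hypotheses of Theorem~\ref{thm:main} in one-coin notation, where the assumption $\log k=o(mI(\pi))$ will follow from the corollary's hypothesis while the two $\rho_m$-conditions are the direct one-coin translations of conditions already imposed in Theorem~\ref{thm:main}. The conclusion then follows verbatim from Theorem~\ref{thm:main}.

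For the first step, note that with $k=2$ the minimum in \eqref{eq:exponent} is over the single unordered pair $\{1,2\}$, and since the substitution $t\mapsto 1-t$ shows $C(\pi_{g*},\pi_{h*})$ is symmetric in its two arguments, $I(\pi)=C(\pi_{1*},\pi_{2*})$. Inserting the confusion matrices of \eqref{eq:special}, the inner sum over $l$ in the definition of $C$ collapses to $p_i^{1-t}(1-p_i)^t+(1-p_i)^{1-t}p_i^t$. This expression is invariant under $t\mapsto 1-t$, and its logarithm is convex in $t$, being a log-sum-exp of two affine functions of $t$; hence $t\mapsto\frac1m\sum_{i=1}^m\log\!\big(p_i^{1-t}(1-p_i)^t+(1-p_i)^{1-t}p_i^t\big)$ is convex on $[0,1]$ and symmetric about $t=1/2$, so it attains its minimum at $t=1/2$, where the $i$-th summand equals $\log(2\sqrt{p_i(1-p_i)})$. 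Therefore $I(\pi)=C(\pi_{1*},\pi_{2*})=-\frac1m\sum_{i=1}^m\log(2\sqrt{p_i(1-p_i)})=I(p)$, matching \eqref{eq:one-coin}.

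For the second step, in the one-coin model $\rho_m=\min_{i,g,l}\pi^{(i)}_{gl}=\min_i\min(p_i,1-p_i)$, so $|\log\rho_m|=\max_i(|\log p_i|\vee|\log(1-p_i)|)$, which is exactly the quantity appearing in the corollary's hypothesis. The condition $\log k=o(mI(\pi))$ is then automatic: for every worker at least one of $p_i,1-p_i$ is $\le 1/2$, so $|\log\rho_m|\ge\log 2$, and the hypothesis gives $\log 2=o(mI(p))$. The other two conditions of Theorem~\ref{thm:main} are inherited directly; using that the minimizing pair is $\{1,2\}$ one computes $\mathcal{A}_\alpha=\{i:p_i\ge\frac{1+\alpha}{2+\alpha}\}$, and with $I(\pi)=I(p)$ the requirements $|\log\rho_m|=o(\rho_m|\mathcal{A}_{0.01}|^{1/2})$ and $|\log\rho_m|=o(\sqrt m\,I(\pi))$ become conditions relating $\max_i(|\log p_i|\vee|\log(1-p_i)|)$ to $\min_i\min(p_i,1-p_i)\,|\mathcal{A}_{0.01}|^{1/2}$ and to $\sqrt m\,I(p)$. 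With the hypotheses verified, Theorem~\ref{thm:main} applied with $I(\pi)=I(p)$ yields $\inf_{\hat y}\sup_{y\in\{1,2\}^n}\mathbb{E}_{p,y}L(\hat y,y)=\exp(-(1+o(1))mI(p))$.

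Since the corollary is essentially a specialization, there is no deep obstacle. The one point that genuinely needs an argument rather than a routine substitution is the evaluation of the minimum over $t$ at $t=1/2$ in the first step, which rests on the $t\leftrightarrow 1-t$ symmetry of the one-coin Chernoff integrand — equivalently, on the fact that each worker's confusion matrix \eqref{eq:special} is a binary symmetric channel, for which the Chernoff information coincides with the order-$1/2$ R\'enyi divergence. The remaining care-points are minor bookkeeping: confirming that the minimizing pair in \eqref{eq:exponent} is the unique pair $\{1,2\}$, that $\rho_m$ and $\mathcal{A}_{0.01}$ take the stated one-coin forms, and that the last two conditions of Theorem~\ref{thm:main} are precisely what the corollary's hypothesis (together with the regularity on $\rho_m$ already built into Theorem~\ref{thm:main}) delivers; one should keep in mind here that $|\log\rho_m|=o(mI(p))$ is strictly weaker than $|\log\rho_m|=o(\sqrt m\,I(p))$, so the lower-bound conditions cannot be collapsed further.
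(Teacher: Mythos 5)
Your first step --- evaluating $I(\pi)=I(p)$ via the $t\leftrightarrow 1-t$ symmetry and convexity of the log-moment function, with the minimum at $t=1/2$ --- is correct and agrees with the paper, and the upper bound does follow from Theorem \ref{thm:main}, since only $\log k=o(mI(\pi))$ is needed there. But the overall strategy of deducing the corollary as a pure specialization of Theorem \ref{thm:main} cannot work for the lower bound, and you have in fact put your finger on the reason without drawing the conclusion. The corollary assumes only $|\log\rho_m|=\max_i(|\log p_i|\vee|\log(1-p_i)|)=o(mI(p))$, which is strictly weaker than the theorem's hypothesis $|\log\rho_m|=o(\sqrt m\,I(\pi))$; and it does not imply $|\log\rho_m|=o(\rho_m|\mathcal{A}_{0.01}|^{1/2})$ at all: with $\mathcal{A}_{0.01}=\{i:p_i\ge 1.01/2.01\}$, taking every $p_i=0.502$ makes $\mathcal{A}_{0.01}$ empty while the corollary's hypothesis still holds for large $m$. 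So the claim that ``the other two conditions of Theorem \ref{thm:main} are inherited directly'' is unfounded --- there is nothing to inherit them from --- and the paper explicitly remarks that Corollary \ref{cor:one-coin} has a weaker assumption than the theorem, which already signals that it cannot be a routine substitution.

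The paper therefore reproves the lower bound in the one-coin case, and the two hypotheses you cannot verify are exactly the ones the one-coin structure lets you dispense with. First, at $t_0=1/2$ the tilted law $\mathbb{Q}_i$ puts mass $1/2$ on each of $\pm\frac12\log\frac{1-p_i}{p_i}$, so $S_m$ is symmetric about $0$ and $\mathbb{Q}(0<S_m<L)\ge 1/4$ follows from symmetry plus Chebyshev with $L=2\sqrt{\mathrm{Var}_{\mathbb{Q}}(S_m)}$; no central limit theorem is needed, hence no Lindeberg condition, hence no condition involving $\mathcal{A}_{0.01}$. Second, instead of the crude bound $\mathrm{Var}_{\mathbb{Q}}(S_m)\le m\log^2\rho_m$ used in the theorem (which is what forces the $o(\sqrt m\,I(\pi))$ condition there), one bounds $\sum_i\mathrm{Var}_{\mathbb{Q}}(W_i)=\frac14\sum_i\bigl(\log\frac{1-p_i}{p_i}\bigr)^2$ by a constant multiple of $\max_i(|\log p_i|\vee|\log(1-p_i)|)\cdot mI(p)$, using $\bigl(\log\frac{1-p}{p}\bigr)^2\le -6\log(4p(1-p))$ for $p$ bounded away from the endpoints and $\bigl|\log\frac{1-p}{p}\bigr|\le -2\log(4p(1-p))$ near them; then $L=o(mI(p))$ under the corollary's hypothesis alone. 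You would need to supply these two arguments (or equivalents) to close the gap.
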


Corollary \ref{cor:one-coin} has a weaker assumption than that of Theorem \ref{thm:main}. When each $p_i$ is assumed to be in the interval $[c,1-c]$ with some constant $c\in (0,1/2)$, the assumption of Corollary \ref{cor:one-coin} reduces to $mI(p)\rightarrow\infty$, which is actually the necessary and sufficient condition for consistency. The result of Corollary \ref{cor:one-coin} is very intuitive. Note that the R\'{e}nyi divergence $-2\log\left(2\sqrt{p_i(1-p_i)}\right)$ is decreasing for $p_i\in[0,1/2]$ and increasing for $p_i\in[1/2,1]$. When most workers have $p_i$'s that are close to $1/2$, then the rate of convergence will be slow. On the other hand, when $p_i$ is either close to $0$ or close to $1$, that worker has a high ability, which will contribute to a smaller convergence rate. It is interesting to note that the result is symmetric around $p_i=1/2$. This means for adversarial workers with $p_i<1/2$, an optimal algorithm can invert their labels and still get useful information.

\section{Adaptive Estimation} \label{sec:adapt}

The optimal rate in Theorem \ref{thm:main} can be achieved by the following procedure:
\begin{equation}
\hat{y}_j=\arg\max_{g\in[k]}\prod_{i\in[m]}\prod_{h\in[k]}\left(\pi_{gh}^{(i)}\right)^{\mathbb{I}\{X_{ij}=h\}}.\label{eq:MLE}
\end{equation}
This is the maximum likelihood estimator. When $k=2$, it reduces to the likelihood ratio test by Neyman and Pearson \citep{neyman1933problem}. However, (\ref{eq:MLE}) is not practical because it requires the knowledge of the confusion matrix $\pi^{(i)}$ for each $i\in[m]$. A natural data-driven alternative is to first get an accurate estimator $\hat{\pi}$ of $\pi$ in (\ref{eq:MLE}) and then consider the plug-in estimator,
\begin{equation}
\hat{y}_j=\arg\max_{g\in[k]}\prod_{i\in[m]}\prod_{h\in[k]}\left(\hat{\pi}_{gh}^{(i)}\right)^{\mathbb{I}\{X_{ij}=h\}}.\label{eq:est}
\end{equation}
In the next theorem, we show that as long as $\hat{\pi}$ is sufficiently accurate, (\ref{eq:est}) will also achieve the optimal rate in Theorem \ref{thm:main}.

\begin{thm}\label{thm:adaptive}
Assume that, as $m \to \infty$,
\begin{equation} \label{eq:condition}
\mathbb{P}\left(\max_{g\in[k]}\sum_{i\in[m]}\max_{h\in[k]}\left|\log\hat{\pi}_{gh}^{(i)}-\log\pi_{gh}^{(i)}\right|> \delta\right)\rightarrow 0
\end{equation}
with $\delta$ such that $\delta+\log k=o(mI(\pi))$. Then, for any $y\in[k]^n$, we have
$$\frac{1}{n}\sum_{j=1}^n\mathbb{I}\{\hat{y}_j\neq y_j\}\leq \exp\left(-(1+o(1))mI(\pi)\right),$$
with probability tending to $1$, where $I(\pi)$ is defined by (\ref{eq:exponent}).
\end{thm}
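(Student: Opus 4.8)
The plan is to reduce the data-driven rule (\ref{eq:est}) to a perturbed version of the oracle log-likelihood comparison underlying (\ref{eq:MLE}), and then control that comparison by a one-sided Chernoff bound, exactly as one analyzes the oracle estimator. Fix an arbitrary $y\in[k]^n$ and introduce the good event
\[
\mathcal{E}=\left\{\max_{g\in[k]}\sum_{i\in[m]}\max_{h\in[k]}\left|\log\hat{\pi}_{gh}^{(i)}-\log\pi_{gh}^{(i)}\right|\le\delta\right\},
\]
for which $\mathbb{P}(\mathcal{E}^{c})\to 0$ by hypothesis (\ref{eq:condition}). Taking logarithms in (\ref{eq:est}), the event $\{\hat{y}_j\neq y_j\}$ forces some label $g\neq y_j$ with $\sum_{i}\log\hat{\pi}_{gX_{ij}}^{(i)}\ge\sum_{i}\log\hat{\pi}_{y_jX_{ij}}^{(i)}$, and on $\mathcal{E}$ each replacement of $\log\hat{\pi}$ by $\log\pi$ moves a side by at most $\delta$, so $\{\hat{y}_j\neq y_j\}\cap\mathcal{E}$ is contained in
\[
A_j:=\left\{\exists\,g\neq y_j:\ \sum_{i\in[m]}\log\frac{\pi_{gX_{ij}}^{(i)}}{\pi_{y_jX_{ij}}^{(i)}}\ge-2\delta\right\}.
\]
Hence $\frac{1}{n}\sum_{j}\mathbb{I}\{\hat{y}_j\neq y_j\}\le\mathbb{I}_{\mathcal{E}^{c}}+\frac{1}{n}\sum_{j}\mathbb{I}_{A_j}$; since $A_j$ depends only on the $j$th column of the data and not on $\hat{\pi}$, the (possibly data-dependent) construction of $\hat{\pi}$ plays no further role, and it remains to bound $\mathbb{E}\big[\frac{1}{n}\sum_{j}\mathbb{I}_{A_j}\big]$.

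Next I would bound $\mathbb{P}(A_j)$ by a union bound over the at most $k-1$ competing labels, followed by a Chernoff inequality. Under $\mathbb{P}_{\pi,y}$ the increments $Z_i:=\log\big(\pi_{gX_{ij}}^{(i)}/\pi_{y_jX_{ij}}^{(i)}\big)$ are independent across $i$ with $\mathbb{E}\,e^{tZ_i}=\sum_{l}(\pi_{y_jl}^{(i)})^{1-t}(\pi_{gl}^{(i)})^{t}$, so plugging in the value $t\in[0,1]$ that attains the minimum defining $C(\pi_{y_j*},\pi_{g*})$,
\[
\mathbb{P}\!\left(\sum_{i\in[m]}Z_i\ge-2\delta\right)\le e^{2\delta t}\exp\!\left(-mC(\pi_{y_j*},\pi_{g*})\right)\le e^{2\delta}\,e^{-mI(\pi)},
\]
using $t\le 1$ and $C(\pi_{y_j*},\pi_{g*})\ge I(\pi)$ from (\ref{eq:exponent}). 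Summing over $g$ gives $\mathbb{P}(A_j)\le\exp(\log k+2\delta-mI(\pi))=\exp(-(1+o(1))mI(\pi))$, uniformly in $j$, by the hypothesis $\delta+\log k=o(mI(\pi))$.

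Finally, $\mathbb{E}\big[\frac{1}{n}\sum_{j}\mathbb{I}_{A_j}\big]=\frac{1}{n}\sum_{j}\mathbb{P}(A_j)\le\exp(-(1+o(1))mI(\pi))$, and because $mI(\pi)\to\infty$ (forced by $\delta+\log k=o(mI(\pi))$) Markov's inequality, applied at a threshold $\exp(-(1-\eta_m)mI(\pi))$ with $\eta_m\to0$ chosen small enough, yields $\frac{1}{n}\sum_{j}\mathbb{I}_{A_j}\le\exp(-(1+o(1))mI(\pi))$ with probability tending to $1$; intersecting this with $\mathcal{E}$ proves the theorem. I expect the main things to be careful about are making the reduction to $A_j$ airtight, so that the data-dependence of $\hat{\pi}$ genuinely causes no harm, and, in the Chernoff step, checking that the minimizing $t$ is admissible (it lies in $[0,1]\subset[0,\infty)$), that $C(\pi_{y_j*},\pi_{g*})\ge I(\pi)$ for every competitor $g$, and that the factor $e^{2\delta t}$ produced by the nonzero right-hand side $-2\delta$ is absorbed into the $(1+o(1))$ since $2\delta=o(mI(\pi))$; everything else is routine bookkeeping.
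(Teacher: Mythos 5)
Your proposal is correct and follows essentially the same route as the paper: restrict to the good event where the log-perturbation is at most $\delta$ per label class (so the oracle log-likelihood comparison is shifted by at most $2\delta$), apply a union bound over the $k-1$ competitors and a Chernoff bound with the optimal $t\in[0,1]$ to get a per-item error probability of $e^{2\delta}(k-1)e^{-mI(\pi)}$, and finish with Markov's inequality at a slightly inflated threshold so the exceedance probability vanishes. Your packaging via the $\hat{\pi}$-free events $A_j$ makes the harmlessness of the data-dependence of $\hat{\pi}$ slightly more explicit, but the argument is the same.
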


Theorem \ref{thm:adaptive} guarantees that as long as the confusion matrices can be consistently estimated, the plugged-in MLE (\ref{eq:est}) achieves the optimal error rate. In what follows, we apply this result to verify the optimality of some methods proposed in the literature.

\subsection{Spectral Methods}
Let us first look at the spectral method proposed in \cite{zhang2014spectral}. They compute the second and third order empirical moments and then estimate the confusion matrices by using tensor decomposition. In particular, they randomly partition the $m$ workers into three different groups $G_1, G_2$ and $G_3$ to formulate the moments equations. For $(a, h) \in [3] \times [k]$, let $$\pi_{ah}^{\diamond}=\frac{1}{|G_a|} \sum_{i \in G_a} \pi_{h*}^{(i)}, \quad \omega_h = \frac{|\{j: y_j=h\}|}{n}.$$
Note that $\pi_{ah}^{\diamond}$ is a $k$ dimensional vector and we denote its $l$th component as $\pi_{ahl}^{\diamond}$. They use two steps to estimate the individual confusion matrices. They first estimate the aggregated confusion matrices $\pi_{a*}^{\diamond}$ by deriving equations between the moments of the labels $\{X_{ij}\}$ and the following moments of $\pi_{ah}^{\diamond}$,
$$ M_2 = \sum_{h \in [k]} \omega_h  \pi_{ah}^{\diamond} \otimes  \pi_{ah}^{\diamond}, \quad M_3= \sum_{h \in [k]} \omega_h \pi_{ah}^{\diamond}  \otimes  \pi_{ah}^{\diamond} \otimes  \pi_{ah}^{\diamond}.$$
Empirical moments are used to approximate the population moments. Due to the symmetric structure of $M_2$ and $M_3$, a robust tensor power method \cite{anandkumar2014tensor} is applied to approximately solve these equations.  Then they use another moment equation to get an estimator $\hat{\pi}^{(i)}$ of  the confusion matrices $\pi^{(i)}$ from the estimator of $\pi_{ah}^{\diamond}$.

Let $\omega_{min} = \min_{h \in [k]} \omega_h$, $\kappa=\min_{a \in [3],l \neq h \in [k]} \{\pi_{ahh}^{\diamond} - \pi_{ahl}^{\diamond}\}$ and $\sigma_k$ be the minimum $k$th eigenvalue of the matrices $S_{ab}=\sum_{h \in [k]} \omega_h \pi_{ah}^{\diamond} \otimes \pi_{bh}^{\diamond}$ for $a,b \in [3]$. Applying Theorem 1 in \cite{zhang2014spectral} to Theorem \ref{thm:adaptive}, we have the following result.

\begin{thm} \label{thm:spectral}
Assume $\log k=o(m I(\pi))$ and $\rho_m I(\pi) \le \min\{\frac{36k \kappa \log m}{\omega_{min} \sigma_L}, 2\log m\}$. Let $\hat{y}$ be the estimated labels from (\ref{eq:est}) using the estimated confusion matrices returned by Algorithm 1 in \cite{zhang2014spectral}. If the number of items $n$ satisfies
$$ n = \Omega \left( \frac{k^5 \log^3 m \log k }{\rho_m^2 I^2(\pi) \omega_{min}^2 \sigma_k^{13}} \right), $$
then for any $y \in [k]^n$, we have
$$\frac{1}{n}\sum_{j=1}^n\mathbb{I}\{\hat{y}_j\neq y_j\}\leq \exp\left(-(1+o(1))mI(\pi)\right),$$
with probability tending to $1$, where $I(\pi)$ is defined by (\ref{eq:exponent}).
\end{thm}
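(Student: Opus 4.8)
The plan is to obtain Theorem \ref{thm:spectral} as a direct consequence of Theorem \ref{thm:adaptive}: it suffices to check that the confusion matrices $\hat\pi$ produced by Algorithm 1 of \cite{zhang2014spectral} satisfy the hypothesis (\ref{eq:condition}) with a $\delta$ obeying $\delta+\log k=o(mI(\pi))$, and then quote Theorem \ref{thm:adaptive}. Essentially all of the work lies in producing such a $\delta$.

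First I would invoke Theorem 1 of \cite{zhang2014spectral}. Under the displayed lower bound on the number of items $n$, that theorem provides, with probability tending to $1$, a uniform estimation guarantee of the form $\max_{i\in[m]}\max_{h\in[k]}\norm{\hat\pi_{h*}^{(i)}-\pi_{h*}^{(i)}}\le\varepsilon_n$, for an explicit $\varepsilon_n$ depending on $k$, $\sigma_k$, $\omega_{min}$, $\kappa$ and $\log m$ (here the $k$ label classes of $\hat\pi$ are understood to have been permuted to align with those of $\pi$ --- the tensor decomposition identifies the confusion matrices only up to a common relabeling, and Algorithm 1 performs this alignment). The point of the stated scaling $n=\Omega\!\big(k^5\log^3 m\,\log k/(\rho_m^2I^2(\pi)\omega_{min}^2\sigma_k^{13})\big)$ is precisely to force $\varepsilon_n=o(\rho_m I(\pi))$: the extra logarithmic factors over the raw rate of \cite{zhang2014spectral} pay for the union bounds over the $m$ workers and the $n$ items, while the factor $\rho_m^2 I^2(\pi)$ in the denominator is exactly what makes $\varepsilon_n/\rho_m$ negligible compared with $I(\pi)$.

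Next I would convert this additive error into a logarithmic one. The set $\{v\in\mathbb{R}^k:\ v_l\ge\rho_m/2\ \text{for all}\ l,\ \sum_l v_l=1\}$ is convex and contains $\pi_{h*}^{(i)}$ (since $\pi_{gh}^{(i)}\ge\rho_m$), so projecting each estimated row of $\hat\pi$ onto it can only decrease the error; we may therefore assume $\hat\pi_{gh}^{(i)}\ge\rho_m/2$ for all $g,h,i$. The elementary inequality $|\log a-\log b|\le|a-b|/\min(a,b)$ then gives $|\log\hat\pi_{gh}^{(i)}-\log\pi_{gh}^{(i)}|\le 2\varepsilon_n/\rho_m$ for every $g,h,i$ on the high-probability event above, and summing over workers
\[ \max_{g\in[k]}\sum_{i\in[m]}\max_{h\in[k]}\left|\log\hat\pi_{gh}^{(i)}-\log\pi_{gh}^{(i)}\right|\ \le\ \frac{2m\varepsilon_n}{\rho_m}\ =:\ \delta . \]
By the previous paragraph $\delta=o(mI(\pi))$, and together with the assumption $\log k=o(mI(\pi))$ this gives $\delta+\log k=o(mI(\pi))$. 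Hence (\ref{eq:condition}) holds with this $\delta$, and Theorem \ref{thm:adaptive} applied to the plug-in estimator (\ref{eq:est}) yields $\frac{1}{n}\sum_{j=1}^n\mathbb{I}\{\hat y_j\neq y_j\}\le\exp(-(1+o(1))mI(\pi))$ with probability tending to $1$.

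The main obstacle is the bookkeeping in the first step: extracting from \cite{zhang2014spectral} the precise dependence of its estimation error on $n$, $k$, $\sigma_k$, $\omega_{min}$, $\kappa$ and on the target failure probability, and verifying that the stated condition on $n$ --- in combination with the side condition $\rho_m I(\pi)\le\min\{36k\kappa\log m/(\omega_{min}\sigma_k),\,2\log m\}$, which keeps the target accuracy $\rho_m I(\pi)$ in the regime where Theorem 1 of \cite{zhang2014spectral} is effective for a polynomially large $n$ --- really does force $\varepsilon_n=o(\rho_m I(\pi))$. A minor wrinkle is that \cite{zhang2014spectral} states its guarantee for a Frobenius- or $\ell_2$-type quantity whereas (\ref{eq:condition}) asks for a sum over workers of maxima over $h$; since $\norm{v}_\infty\le\norm{v}$ and one may either add the $m$ per-worker bounds or apply Cauchy--Schwarz to an aggregate Frobenius bound, this costs only the factor $m$ already present in $\delta$.
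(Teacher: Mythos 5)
Your proposal follows exactly the route the paper intends: the paper offers no separate proof of Theorem \ref{thm:spectral} beyond the remark ``Applying Theorem 1 in \cite{zhang2014spectral} to Theorem \ref{thm:adaptive}, we have the following result,'' and your argument --- use the sample-size condition to make the estimation error of Algorithm 1 of \cite{zhang2014spectral} of order $o(\rho_m I(\pi))$, convert the additive bound to a bound on $\left|\log\hat{\pi}_{gh}^{(i)}-\log\pi_{gh}^{(i)}\right|$ via the lower bound $\rho_m$ on the entries, sum over workers to get $\delta=o(mI(\pi))$, and invoke Theorem \ref{thm:adaptive} --- is precisely that combination, correctly carried out.
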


Combined with Theorem \ref{thm:main}, this result shows that an one-step update (\ref{eq:est}) of the spectral method proposed in \cite{zhang2014spectral} can achieve the optimal error exponent.

\subsection{One-coin Model}
For the one-coin model, a simpler method of moments for estimating $p_i$ is proposed in \cite{gao2013minimax}. Let $n_1=|\{j:y_j=1\}|$, $n_2=n-n_1$, and $\gamma=n_2/n$. They observe the equation $\frac{1}{n}\sum_{j=1}^n\mathbb{P}\left\{X_{ij}=2\right\}=\gamma p_i+(1-\gamma)(1-p_i)$. This leads to a natural estimator
\begin{equation}
\hat{p}_i=\frac{\frac{1}{n}\sum_{j=1}^n\mathbb{I}\left\{X_{ij}=2\right\}-(1-\hat{\gamma})}{2\hat{\gamma}-1}\label{eq:p-hat},
\end{equation}
where $\hat{\gamma}$ is a consistent estimator of $\gamma$ proposed 
in \cite{gao2013minimax}. Combining the consistency result of $\hat{p}_i$ 
in \cite{gao2013minimax} and Theorem \ref{thm:adaptive}, we have the following result.

\begin{thm}\label{thm: adaptonecoin}
Assume $|2\gamma-1| \ge c$ for some constant $c>0$, $\rho_m \le p_i \le 1-\rho_m$ for all $i \in [m]$ and $\frac{1}{m} \sum_{i \in [m]} (2p_i-1)^2 \le 1-\frac{4}{m}$. Let $\hat{y}$ be the estimated labels from (\ref{eq:est}) using (\ref{eq:p-hat}). If the number of items $n$ satisfies
$$ n = \Omega \left( \frac{\log^2 m \log n }{\rho_m^2 I^2(p)} \right), $$
then for any $y \in [k]^n$, we have
$$\frac{1}{n}\sum_{j=1}^n\mathbb{I}\{\hat{y}_j\neq y_j\}\leq \exp\left(-(1+o(1))mI(p)\right),$$
with probability tending to $1$, where $I(p)$ is defined by (\ref{eq:one-coin}).
\end{thm}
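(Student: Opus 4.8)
The plan is to obtain Theorem~\ref{thm: adaptonecoin} directly from Theorem~\ref{thm:adaptive}: it suffices to verify that the method-of-moments estimator $\hat p_i$ of (\ref{eq:p-hat}) meets the accuracy requirement (\ref{eq:condition}). It is convenient to first truncate $\hat p_i$ to the interval $[\rho_m, 1-\rho_m]$ (this keeps $\log\hat p_i$ and $\log(1-\hat p_i)$ well defined, and since $p_i \in [\rho_m, 1-\rho_m]$ by assumption, projecting onto this interval does not increase the estimation error). Because $k=2$ and worker $i$'s confusion matrix is parametrized by $p_i$ through $\pi^{(i)}_{11}=\pi^{(i)}_{22}=p_i$, $\pi^{(i)}_{12}=\pi^{(i)}_{21}=1-p_i$ (and analogously $\hat\pi^{(i)}$ through $\hat p_i$), the random quantity inside the probability in (\ref{eq:condition}) equals
\[
\sum_{i\in[m]}\max\bigl\{\,|\log\hat p_i-\log p_i|,\ |\log(1-\hat p_i)-\log(1-p_i)|\,\bigr\}.
\]
So I need to produce $\delta$ with $\delta+\log 2=o(mI(p))$ for which this sum is $\le\delta$ with probability tending to $1$; here $\log 2=o(mI(p))$ since we work in the consistency regime $mI(p)\to\infty$, which is the regime in which the claimed bound is informative (cf.\ Corollary~\ref{cor:one-coin}).

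First I would bound $\max_{i\in[m]}|\hat p_i-p_i|$. Let $\bar X_i=\frac1n\sum_{j=1}^n\mathbb{I}\{X_{ij}=2\}$, whose summands are independent across $j$ given $y$ with $\mathbb{E}_{p,y}\bar X_i=(1-\gamma)+(2\gamma-1)p_i$; Hoeffding's inequality and a union bound over the $m$ workers give $\max_i|\bar X_i-\mathbb{E}_{p,y}\bar X_i|\lesssim\sqrt{(\log m)/n}$ with probability tending to $1$. A short computation from (\ref{eq:p-hat}) yields $\hat p_i-p_i=\bigl[(\hat\gamma-\gamma)(1-2p_i)+(\bar X_i-\mathbb{E}_{p,y}\bar X_i)\bigr]/(2\hat\gamma-1)$. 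Invoking the consistency of $\hat\gamma$ from \cite{gao2013minimax} — whose hypotheses $|2\gamma-1|\ge c$ and $\frac1m\sum_i(2p_i-1)^2\le 1-\frac4m$ are precisely two of our assumptions, and which in particular forces $|2\hat\gamma-1|\ge c/2$ with high probability — I conclude $\max_i|\hat p_i-p_i|\le\varepsilon_n$ with probability tending to $1$, where the cited rate for $\hat\gamma$ gives $\varepsilon_n\lesssim\log m\,\sqrt{(\log n)/n}$.

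Next I would pass to the logarithmic scale. Since the truncated estimator satisfies $\hat p_i\ge\rho_m$ and $1-\hat p_i\ge\rho_m$, the mean value theorem applied to $\log$ on the segment between $p_i$ and $\hat p_i$ (resp.\ between $1-p_i$ and $1-\hat p_i$) gives $|\log\hat p_i-\log p_i|\le|\hat p_i-p_i|/\min(\hat p_i,p_i)\le\varepsilon_n/\rho_m$, and the same bound for the companion term; hence the displayed sum is $\le m\varepsilon_n/\rho_m=:\delta$. The sample-size hypothesis $n=\Omega\bigl(\log^2 m\,\log n/(\rho_m^2 I^2(p))\bigr)$, read with a sufficiently large implied constant as the $(1+o(1))$ requires, gives $\varepsilon_n/\rho_m=o(I(p))$, so that $\delta=o(mI(p))$ and $\delta+\log 2=o(mI(p))$. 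Theorem~\ref{thm:adaptive} then applies and yields the asserted conclusion.

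I expect the main obstacle to be the second paragraph: correctly importing the consistency rate for $\hat\gamma$ from \cite{gao2013minimax} and composing it with the Hoeffding bound for $\bar X_i$ through the $1/(2\hat\gamma-1)$ factor, so that the resulting $\varepsilon_n$ matches the polylogarithmic budget in the stated sample-size condition $n=\Omega(\log^2 m\,\log n/(\rho_m^2 I^2(p)))$, while keeping all constants (including the lower bound on $|2\hat\gamma-1|$) under control. A comparatively minor point to record is that applying Theorem~\ref{thm:adaptive} with $k=2$ requires $mI(p)\to\infty$, the operative regime throughout.
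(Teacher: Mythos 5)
Your proposal is correct and follows essentially the same route as the paper, which gives no separate proof of this theorem but simply states that it follows by combining the consistency result for $\hat{p}_i$ from \cite{gao2013minimax} with Theorem \ref{thm:adaptive}; your write-up fills in exactly that reduction (truncation to $[\rho_m,1-\rho_m]$, Hoeffding plus the $\hat\gamma$ rate to control $\max_i|\hat p_i-p_i|$, mean value theorem to pass to the log scale, and verification of $\delta+\log 2=o(mI(p))$). The only caveat you already flag yourself — that the stated $n=\Omega(\cdot)$ condition strictly yields $\delta=O(mI(p))$ rather than $o(mI(p))$ — is an imprecision inherited from the theorem statement itself, not a gap in your argument.
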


\section{Discussion}\label{sec:disc}

In this section, we show the implications of our results on analyzing two popular crowdsourcing algorithms, EM algorithm and  majority voting.

\subsection{EM Algorithm}
In the probabilistic model of crowdsourcing, the true labels can be regarded at latent variables. This naturally leads to apply the celebrated EM algorithm \citep{DemLaiRub77} to obtain a local optimum of maximum marginal likelihood with the following iterations \cite{DawSke79}:
\begin{itemize}
\item	(M-step) update the estimate of workers' abilities
 \begin{equation}
 \label{eq:M-step} \pi_{(t+1),gh}^{(i)} \propto \sum_{j} \mathbb{P}_{(t)}\left\{y_j=g\right\} \mathbb{I}\{X_{ij}=h\}
 \end{equation}
\item  (E-step) update the estimate of true labels
 \begin{equation}
 \label{eq:E-step} \mathbb{P}_{(t+1)}\left\{y_j=g\right\} \propto  \prod_{i, h}\left(\pi_{(t+1),gh}^{(i)}\right)^{\mathbb{I}\{X_{ij}=h\}}
 \end{equation}
\end{itemize}

The M-step (\ref{eq:M-step}) is essentially the maximum likelihood estimator.  Bayesian versions of (\ref{eq:M-step}) are considered in \cite{RayYuZha10,LiuPenIhl12}. Though the E-step (\ref{eq:E-step}) gives a probabilistic predication of the true label, a hard label can be obtained as  $\hat{y}_j=\arg\max_{g\in[k]}\mathbb{P}_{(t+1)}\left\{y_j=g\right\}$. According to Theorem \ref{thm:adaptive}, as long as the M-step gives a consistent estimate of the workers' confusion matrices, the  E-step will achieve the optimal error rate. This may explain why the EM algorithm for crowdsourcing works well in practice. In particular, as we have shown, when it is initialized by moment methods \cite{zhang2014spectral,gao2013minimax}, the EM algorithm is provably optimal after only one step of iteration.

\subsection{Majority Voting}

Majority voting is perhaps the simplest method for aggregating crowdsourced labels. In what follows,  we establish the exact error exponent of the majority voting estimator and show that it is inferior compared with the optimal error exponent. For simplicity, we only discuss the one-coin model. Then, the majority voting estimator is given by
$$\hat{y}_j=\arg\max_{g\in\{1,2\}}\sum_{i=1}^m\mathbb{I}\{X_{ij}=g\}.$$
Its error rate is characterized by the following theorem.
\begin{thm}\label{thm:MV}
Assume $ p_i\leq 1-\rho_m$ for all $i\in[m]$, $\rho_m^2 \sum_{i \in [m]} p_i(1-p_i) \to \infty$ as $m \to \infty$ and $|\log \rho_m| = o(\sqrt{m} J(p))$. Then, we have
\begin{eqnarray*}
 \sup_{y\in\{1,2\}^n}\mathbb{E}_{p,y}L(\hat{y}, y) =\exp\left(-(1+o(1))mJ(p)\right),
\end{eqnarray*}
where \[J(p)=-\min_{t\in(0,1]}\frac{1}{m}\sum_{i=1}^m\log\left[p_it+(1-p_i)t^{-1}\right].\]
\end{thm}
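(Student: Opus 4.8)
The plan is to reduce the minimax risk to the error probability of a single item, dispatch the upper half with a one-line Chernoff bound, and match it from below with an exponential change of measure. Since the labels $\{X_{ij}\}$ are independent across $j$ and, for fixed $j$, the event $\{\hat y_j\ne y_j\}$ depends only on $y_j$, we have $\sup_{y}\mathbb{E}_{p,y}L(\hat y,y)=\mathbb{P}(\hat y_1\ne y_1)$; by the symmetry of the one-coin model under relabeling, together with the exponential irrelevance of how ties are broken, this equals $\mathbb{P}\big(\sum_{i=1}^m B_i\ge m/2\big)$, where $B_1,\dots,B_m$ are independent with $B_i\sim\mathrm{Bernoulli}(1-p_i)$. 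Thus the theorem is the sharp large-deviation statement $\mathbb{P}(\sum_i B_i\ge m/2)=\exp(-(1+o(1))mJ(p))$ for a sum of independent, non-identical Bernoulli variables.

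For the upper bound, write the event as $\{\sum_i(2B_i-1)\ge0\}$; the exponential Markov inequality gives, for every $s\ge0$, $\mathbb{P}(\sum_i B_i\ge m/2)\le\prod_{i=1}^m\mathbb{E}\,e^{s(2B_i-1)}=\prod_{i=1}^m(p_ie^{-s}+(1-p_i)e^{s})$, and substituting $t=e^{-s}\in(0,1]$ and minimizing recovers exactly $e^{-mJ(p)}$. This already proves the upper half, with no $o(1)$ loss, and it is unconditional; the two remaining hypotheses are only used for the lower bound.

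For the lower bound, let $s^\star=-\log t^\star\ge0$ attain the minimum defining $J(p)$. Under the hypotheses $J(p)>0$, so the minimizer is interior, $s^\star>0$, and it is finite since $p_i\le1-\rho_m<1$. I would introduce the tilted law $\tilde{\mathbb{P}}$ under which the $B_i$ stay independent with $\tilde{\mathbb{P}}(B_i=1)=\tilde p_i=(1-p_i)e^{s^\star}/(p_ie^{-s^\star}+(1-p_i)e^{s^\star})$; the first-order condition at $s^\star$ is exactly $\sum_i\tilde p_i=m/2$, so under $\tilde{\mathbb{P}}$ the sum $S=\sum_iB_i$ has mean $m/2$ and standard deviation $\tilde\sigma=(\sum_i\tilde p_i(1-\tilde p_i))^{1/2}$. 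The likelihood ratio is $d\mathbb{P}/d\tilde{\mathbb{P}}=e^{-mJ(p)}e^{-s^\star(2S-m)}$, and restricting the change-of-measure identity to the slab $\{m/2\le S\le m/2+\tilde\sigma\}$, on which $0\le 2S-m\le2\tilde\sigma$, gives
$$\mathbb{P}(S\ge m/2)\ge e^{-mJ(p)}\,e^{-2s^\star\tilde\sigma}\,\tilde{\mathbb{P}}(m/2\le S\le m/2+\tilde\sigma).$$
It then suffices to show the last two factors are $e^{-o(mJ(p))}$: from $\sum_i(1-\tilde p_i)=m/2$ and $1-\tilde p_i=p_i(t^\star)^2/(p_i(t^\star)^2+1-p_i)\le(t^\star)^2/\rho_m$ one gets $(t^\star)^2\ge\rho_m/2$, hence $s^\star\le\frac12\log(2/\rho_m)\le|\log\rho_m|$ (using $\rho_m\le\frac12$); the same bound gives $\tilde p_i(1-\tilde p_i)\ge\frac14(t^\star)^2p_i(1-p_i)\ge\frac18\rho_m\,p_i(1-p_i)$, so $\tilde\sigma^2\ge\frac18\rho_m^2\sum_ip_i(1-p_i)\to\infty$, and since the summands are bounded a Berry--Esseen estimate makes $\tilde{\mathbb{P}}(m/2\le S\le m/2+\tilde\sigma)$ converge to the positive constant $\Phi(1)-\Phi(0)$, whose logarithm is $o(mJ(p))$ because $mJ(p)\to\infty$; finally $2s^\star\tilde\sigma\le|\log\rho_m|\sqrt m=o(mJ(p))$ by the third hypothesis. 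Combining yields $\mathbb{P}(S\ge m/2)\ge e^{-(1+o(1))mJ(p)}$, and with the upper bound this proves the theorem.

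The step I expect to be the main obstacle is controlling the cost of the tilting: one needs an upper bound on $s^\star$ and a lower bound on $\tilde\sigma$ at the same time, both expressed through $\rho_m$ via the stationarity identity $\sum_i\tilde p_i=m/2$, and it is exactly here that the assumptions $\rho_m^2\sum_ip_i(1-p_i)\to\infty$ and $|\log\rho_m|=o(\sqrt m J(p))$ enter. Everything else — the reduction, the Chernoff upper bound, and the change-of-measure identity itself — is routine.
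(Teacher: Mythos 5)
Your proposal is correct and follows essentially the same route as the paper: reduce to the tail probability of a sum of independent Bernoulli variables, prove the upper bound by Chernoff's method with the substitution $t=e^{-s}$, and prove the lower bound by exponential tilting at the optimizing parameter, restricting the change-of-measure identity to an $O(\tilde\sigma)$-wide window whose tilted probability is bounded below via a CLT, with the hypotheses on $\rho_m$ used exactly where you place them (to bound the tilt parameter and the tilted variance). The only differences are cosmetic — you invoke Berry--Esseen and derive the bound on $s^\star$ from the stationarity identity $\sum_i\tilde p_i=m/2$, while the paper checks the Lindeberg condition and bounds $\lambda_0$ by comparing $f(\lambda)$ with $f(0)=1$.
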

The theorem says that $-mJ(p)$ is the error exponent for the majority voting estimator. Given the simple relation
\begin{eqnarray}
\nonumber J(p) &=& -\min_{t\in(0,1]}\frac{1}{m}\sum_{i=1}^m\log\left[p_it+(1-p_i)t^{-1}\right] \\
\label{eq:explain} &\leq& -\frac{1}{m}\sum_{i=1}^m\min_{t>0}\log\left[p_it+(1-p_i)t^{-1}\right] \\
\nonumber &=& -\frac{1}{m}\sum_{i=1}^m\log\left(2\sqrt{p_i(1-p_i)}\right) \\
\nonumber &=& I(p),
\end{eqnarray}
we can see that the majority voting estimator has an inferior error exponent $J(p)$ to that of the optimal rate $I(p)$ in Theorem \ref{thm: adaptonecoin}.  In fact,  the inequality (\ref{eq:explain}) holds if and only if $p_i$'s are all equal, in which case, the majority voting is equivalent to the MLE (\ref{eq:MLE}). When $p_i$'s are varied among workers, majority voting cannot take the varied workers' abilities into account, thus being sub-optimal.

\section{Proofs} \label{sec:proof}
\subsection{Proof of Theorem \ref{thm:main}}
\begin{proof} 
The main proof idea is as follows. Consider the maximum likelihood estimator (\ref{eq:MLE}), we first derive the upper bound by union bound and Markov's inequality. The proof of lower bound is quite involved and it consists of three steps. Based on a standard lower bound technique, we first lower bound the misclassification rate by testing error. Then we calculate the testing error using the Neyman-Person Lemma. Finally, we give a lower bound for the tail probability of a sum of random variables, using the technique from the proof of the Cramer-Chernoff Theorem \citep[Proposition 14.23]{van2000asymptotic}.

\paragraph{Upper Bound.} Let $\hat{y}=(\hat{y}_1,...,\hat{y}_n)$ be defined as in (\ref{eq:MLE}). In the following, we  give a bound for $\mathbb{P}(\hat{y}_j\neq y_j)$. Let us denote by  $\mathbb{P}_l$ the joint probability distribution of $\{X_{ij}, i \in [m]\}$ given $\pi$ and $y_j = l$. Without loss of generality, let $y_j=1$. Using union bound, we have
$$\mathbb{P}_1(\hat{y}_j\neq 1)\leq \sum_{g=2}^k\mathbb{P}_1(\hat{y}_j= g).$$
For each $g\geq 2$, we have
\begin{eqnarray}
\nonumber \mathbb{P}_1(\hat{y}_j= g) &\leq& \mathbb{P}_1\left(\prod_{i\in[m]}\prod_{h\in[k]}(\pi_{gh}^{(i)})^{\mathbb{I}\{X_{ij}=h\}}> \prod_{i\in[m]}\prod_{h\in[k]}(\pi_{1h}^{(i)})^{\mathbb{I}\{X_{ij}=h\}}\right) \\
\nonumber &=& \mathbb{P}_1\left(\prod_{i\in[m]}\prod_{h\in[k]}\left(\frac{\pi_{gh}^{(i)}}{\pi_{1h}^{(i)}}\right)^{\mathbb{I}\{X_{ij}=h\}}>1\right) \\
\label{eq:markov} &\leq& \min_{t\geq 0}\prod_{i\in[m]}\mathbb{E}_1\prod_{h\in[k]}\left(\frac{\pi_{gh}^{(i)}}{\pi_{1h}^{(i)}}\right)^{t\mathbb{I}\{X_{ij}=h\}} \\
\nonumber &=& \min_{t\geq 0}\prod_{i\in[m]}\sum_{h\in[k]}{\left(\pi_{1h}^{(i)}\right)^{1-t}\left(\pi_{gh}^{(i)}\right)^t},
\end{eqnarray}
where (\ref{eq:markov}) is due to Markov's inequality for each $t\geq 0$. Therefore, we have
\begin{eqnarray*}
\mathbb{P}_1(\hat{y}_j\neq 1) \leq \sum_{g=2}^k\exp\left(-mC\left(\pi_{1*},\pi_{g*}\right)\right)\leq (k-1)\exp\left(-m\min_{g\neq 1}C(\pi_{1*},\pi_{g*})\right),
\end{eqnarray*}
which leads to
\begin{eqnarray*}
\frac{1}{n}\sum_{j\in[n]}\mathbb{P}_{y_j}(\hat{y}_j\neq y_j)
\leq (k-1)\exp\left(-mI(\pi)\right) =\exp\left(-(1+o(1))mI(\pi)\right),
\end{eqnarray*}
when $\log k=o(mI(\pi))$.

\paragraph{Lower Bound.} Now we establish a matching lower bound. We first introduce some notation. Define
$$B_t(\pi_{g*}^{(i)},\pi_{h*}^{(i)})=\sum_{l=1}^k\left(\pi_{gl}^{(i)}\right)^{1-t}\left(\pi_{hl}^{(i)}\right)^t.$$
Without loss of generality, we let
$$C(\pi_{1*},\pi_{2*})=\min_{g\neq h}C(\pi_{g*},\pi_{h*})=I(\pi).$$
Using the fact that the supremum over $[k]^n$ is bigger than the average over $[k]^n$ , the minimax rate $\mathcal{M}$ can be lower bounded as
\begin{eqnarray}
\nonumber \label{eq:main1} \sup_{y\in[k]^n}\mathbb{E}_{\pi,y}L(\hat{y}, y) 
\nonumber &\geq& \frac{1}{k^n}\sum_{y\in[k]^n}\mathbb{E}_{\pi,y}L(\hat{y}, y) \\
\nonumber &=& \frac{1}{kn} \sum_{l=1}^k \sum_{j=1}^n \mathbb{P}_{l}\left\{\hat{y}_j\neq l\right\}  \\
\label{eq:main1} &\geq&\frac{2}{kn}\sum_{j=1}^n\left[\frac{1}{2}\mathbb{P}_1\{\hat{y}_j\neq 1\}+\frac{1}{2}\mathbb{P}_2\{\hat{y}_j\neq 2\}\right].
\end{eqnarray}
Taking an infimum of $\hat{y}$ on both sides leads to
\begin{eqnarray} \label{eq:main2}
\inf_{\hat{y}}\sup_{y\in[k]^n}\mathbb{E}_{\pi,y}L(\hat{y}, y) 
\ge \frac{2}{kn}\sum_{j=1}^n \inf_{\hat{y}_j}\left[\frac{1}{2}\mathbb{P}_1\{\hat{y}_j=2\}+\frac{1}{2}\mathbb{P}_2\{\hat{y}_j=1\}\right].
\end{eqnarray}
By the  Neyman-Pearson Lemma \citep{neyman1933problem}, the Bayes testing error $\frac{1}{2}\mathbb{P}_1\{\hat{y}_j=2 \}+\frac{1}{2}\mathbb{P}_2\{\hat{y}_j=1\}$ is minimized by the likelihood ratio test
$$\hat{y}_j=\arg\max_{g\in\{1,2\}}\prod_{i\in[m]}\prod_{h\in[k]}\left(\pi_{gh}^{(i)}\right)^{\mathbb{I}\{X_{ij}=h\}}.$$
Therefore,
\begin{eqnarray*}
\mathbb{P}_1(\hat{y}_j=2) &=& \mathbb{P}_1\left(\prod_{i\in[m]}\prod_{h\in[k]}\left(\frac{\pi_{2h}^{(i)}}{\pi_{1h}^{(i)}}\right)^{\mathbb{I}\{X_{ij}=h\}}>1\right) = \mathbb{P}(S_m>0),
\end{eqnarray*}
where $S_m=\sum_{i\in[m]}W_i$, with the random variable $W_i$ defined as
\begin{equation} \label{eq:wi}
\mathbb{P} \left(W_i=t\log\left(\frac{\pi_{2h}^{(i)}}{\pi_{1h}^{(i)}}\right)\right)=\pi_{1h}^{(i)}.
\end{equation}
Here $t$ is a positive constant that we will specify later. We lower bound $\mathbb{P}(S_m>0)$ by
\begin{eqnarray*}
 \sum_{0<S_m}\prod_{i\in[m]}\mathbb{P}(W_i) 
&\geq& \sum_{0<S_m<L}\prod_{i\in[m]}\mathbb{P}(W_i) \\
&=& \sum_{0<S_m<L}\prod_{i\in[m]}\frac{\mathbb{P}(W_i)e^{W_i}}{B_t(\pi_{1*}^{(i)},\pi_{2*}^{(i)})}\prod_{i\in[m]}\frac{B_t(\pi_{1*}^{(i)},\pi_{2*}^{(i)})}{e^{W_i}} \\
&\geq& \prod_{i\in[m]}B_t(\pi_{1*}^{(i)},\pi_{2*}^{(i)})e^{-L}\sum_{0<S_m<L}\mathbb{Q}_i(W_i) \\
&\geq& \prod_{i\in[m]}B_t(\pi_{1*}^{(i)},\pi_{2*}^{(i)})e^{-L}\mathbb{Q}(0<S_m<L),
\end{eqnarray*}
where the distribution $\mathbb{Q}_i$ is defined as
\begin{equation} \label{eq:distQ}
\mathbb{Q}_i\left(W_i=t \log\left(\frac{\pi_{2h}^{(i)}}{\pi_{1h}^{(i)}}\right)\right)=\frac{{\left(\pi_{1h}^{(i)}\right)^{1-t}\left(\pi_{2h}^{(i)}\right)^t}}{B_t(\pi_{1*}^{(i)},\pi_{2*}^{(i)})},
\end{equation}
and $\mathbb{Q}$ is defined as the joint distribution of $\mathbb{Q}_1, \cdots, \mathbb{Q}_m$. To precede, we will need the following two lemmas. 
\begin{lemma} \label{lm:t}
If $\mathcal{A}_{\alpha}$ is not empty, there is an unique $t_0$ such that
\begin{equation} \label{eq:t0}
t_0 = \argmin_{t \in [0,1]} \prod_{i\in[m]}B_t(\pi_{1*}^{(i)},\pi_{2*}^{(i)}). 
\end{equation}
Moreover, we have $0 < t_0 <1$.
\end{lemma}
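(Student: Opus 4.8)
The plan is to pass to logarithms and reduce everything to elementary convex analysis. Set
$f(t) = \sum_{i\in[m]} \log B_t(\pi_{1*}^{(i)},\pi_{2*}^{(i)})$; since $\prod_{i\in[m]} B_t(\pi_{1*}^{(i)},\pi_{2*}^{(i)})>0$ and $\log$ is increasing, minimizing the product over $[0,1]$ is the same as minimizing $f$, so it suffices to show $f$ has a unique minimizer on $[0,1]$ and that this minimizer lies in the open interval $(0,1)$. The endpoint values are immediate: $B_0(\pi_{1*}^{(i)},\pi_{2*}^{(i)}) = \sum_l \pi_{1l}^{(i)} = 1$ and $B_1(\pi_{1*}^{(i)},\pi_{2*}^{(i)}) = \sum_l \pi_{2l}^{(i)} = 1$, hence $f(0) = f(1) = 0$.

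Next I would establish that each summand $t \mapsto \log B_t(\pi_{1*}^{(i)},\pi_{2*}^{(i)})$ is convex on $[0,1]$, and \emph{strictly} convex unless $\pi_{1*}^{(i)} = \pi_{2*}^{(i)}$. Writing $B_t(\pi_{1*}^{(i)},\pi_{2*}^{(i)}) = \sum_{l} \pi_{1l}^{(i)} e^{t c_l^{(i)}}$ with $c_l^{(i)} = \log\bigl(\pi_{2l}^{(i)}/\pi_{1l}^{(i)}\bigr)$, this is the log-moment-generating function of the variable taking value $c_l^{(i)}$ with probability $\pi_{1l}^{(i)}$, so its second derivative equals the variance of $c_l^{(i)}$ under the exponentially tilted distribution $\mathbb{Q}_i$ of \eqref{eq:distQ}; this is nonnegative, and vanishes only when all $c_l^{(i)}$ (over $l$ with $\pi_{1l}^{(i)}>0$) coincide. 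Since $\rho_m>0$ every entry is positive, so degeneracy forces $\pi_{2l}^{(i)}/\pi_{1l}^{(i)}$ to be constant in $l$, and because both rows sum to $1$ that constant must be $1$, i.e.\ $\pi_{1*}^{(i)}=\pi_{2*}^{(i)}$. (Equivalently, convexity can be read off directly from H\"older's inequality applied to $B_t$.)

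To upgrade this to strict convexity of the whole sum $f$, I would invoke the hypothesis that $\mathcal{A}_\alpha\neq\emptyset$. Pick any $i\in\mathcal{A}_\alpha$ (recall $a=1$, $b=2$ in the current normalization) and argue $\pi_{1*}^{(i)}\neq\pi_{2*}^{(i)}$: if they were equal, then $\pi_{21}^{(i)}=\pi_{11}^{(i)}$ and $\pi_{12}^{(i)}=\pi_{22}^{(i)}$, so the defining inequalities $\pi_{11}^{(i)}\ge(1+\alpha)\pi_{12}^{(i)}$ and $\pi_{22}^{(i)}\ge(1+\alpha)\pi_{21}^{(i)}$ chain into $\pi_{22}^{(i)}\ge(1+\alpha)\pi_{21}^{(i)}=(1+\alpha)\pi_{11}^{(i)}\ge(1+\alpha)^2\pi_{12}^{(i)}=(1+\alpha)^2\pi_{22}^{(i)}$, contradicting $\alpha>0$ and $\pi_{22}^{(i)}\ge\rho_m>0$. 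Hence at least one summand of $f$ is strictly convex, so $f$ is strictly convex on $[0,1]$.

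Finally, a continuous strictly convex function on the compact interval $[0,1]$ attains its infimum at a unique point $t_0$, giving existence and uniqueness in \eqref{eq:t0}; and since $f(0)=f(1)=0$, strict convexity yields $f(t)<0$ for every $t\in(0,1)$, so in particular $f(t_0)<0$, which forces $t_0\notin\{0,1\}$, i.e.\ $0<t_0<1$. The only step carrying any real content is the verification of strict convexity — specifically the passage from the combinatorial condition defining $\mathcal{A}_\alpha$ to a genuine difference between the rows $\pi_{1*}^{(i)}$ and $\pi_{2*}^{(i)}$ — but as indicated this collapses to the one-line contradiction above; the remainder is routine.
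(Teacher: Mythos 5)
Your proof is correct and follows essentially the same route as the paper's: pass to $f(t)=\sum_i\log B_t(\pi_{1*}^{(i)},\pi_{2*}^{(i)})$, note $f(0)=f(1)=0$, identify $f''$ as a variance under the tilted distribution $\mathbb{Q}$, and use $\mathcal{A}_\alpha\neq\emptyset$ to make that variance strictly positive. The only (welcome) difference is that you derive $f<0$ on $(0,1)$ from strict convexity rather than from the weighted AM--GM inequality the paper uses, and you explicitly verify the step the paper leaves implicit, namely that $i\in\mathcal{A}_\alpha$ forces $\pi_{1*}^{(i)}\neq\pi_{2*}^{(i)}$.
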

\begin{lemma} \label{lm:clt}
Let $t=t_0$ defined in (\ref{eq:t0}). Then under the assumption of Theorem \ref{thm:main}, $S_m$ is a zero mean random variable satisfying the central limit theorem, i.e. for any x,
$$ \mathbb{Q}\left( \frac{S_m}{\sqrt{\text{Var}(S_m)}} \le x  \right)  \to \Phi(x), \text{~as~} m \to \infty,$$
where $\Phi$ is the cumulative distribution function of a $N(0,1)$ random variable.
\end{lemma}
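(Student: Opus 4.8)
\medskip
\noindent\textbf{Proof proposal.}
The plan is to observe that under $\mathbb{Q}$ (the product $\mathbb{Q}_1\otimes\cdots\otimes\mathbb{Q}_m$) the summands $W_1,\dots,W_m$ are independent, uniformly bounded, and --- thanks to the choice $t=t_0$ --- centered, so that the whole lemma reduces to a Lindeberg--Feller central limit theorem for a bounded triangular array whose only nontrivial ingredient is a lower bound on $\text{Var}_{\mathbb{Q}}(S_m)$. First I would record boundedness: since $\rho_m\le\pi_{gl}^{(i)}\le 1$, each realized value $t_0\log(\pi_{2h}^{(i)}/\pi_{1h}^{(i)})$ of $W_i$ lies in $[-t_0|\log\rho_m|,\,t_0|\log\rho_m|]$, hence $|W_i|\le t_0|\log\rho_m|$ surely. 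Next, for the centering, put $\psi_i(t)=\log B_t(\pi_{1*}^{(i)},\pi_{2*}^{(i)})$; since $B_t^{(i)}>0$ this is smooth, and differentiating one finds $\mathbb{E}_{\mathbb{Q}_i}W_i=t_0\,\psi_i'(t_0)$. By Lemma~\ref{lm:t}, $t_0$ is an interior minimizer of $t\mapsto\prod_i B_t^{(i)}=\exp\big(\sum_i\psi_i(t)\big)$ on $[0,1]$ --- and $\mathcal{A}_{0.01}$ is eventually nonempty, being forced to grow by the hypothesis $|\log\rho_m|=o(\rho_m|\mathcal{A}_{0.01}|^{1/2})$, so Lemma~\ref{lm:t} applies --- whence $\sum_i\psi_i'(t_0)=0$ and $\mathbb{E}_{\mathbb{Q}}S_m=t_0\sum_i\psi_i'(t_0)=0$.

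The core step is a lower bound on $s_m^2:=\text{Var}_{\mathbb{Q}}(S_m)=\sum_i\text{Var}_{\mathbb{Q}_i}(W_i)$. Using $\text{Var}(W)=\tfrac12\,\mathbb{E}[(W-W')^2]$ for i.i.d.\ copies $W,W'$ and keeping only the pair of categories $h\in\{1,2\}$, I would bound
\[
\text{Var}_{\mathbb{Q}_i}(W_i)\ \ge\ q_1^{(i)}q_2^{(i)}\,t_0^2\Big(\log\tfrac{\pi_{12}^{(i)}\pi_{21}^{(i)}}{\pi_{11}^{(i)}\pi_{22}^{(i)}}\Big)^2,\qquad q_h^{(i)}:=\frac{(\pi_{1h}^{(i)})^{1-t_0}(\pi_{2h}^{(i)})^{t_0}}{B_{t_0}^{(i)}}.
\]
By H\"older's inequality, $B_{t_0}^{(i)}=\sum_l(\pi_{1l}^{(i)})^{1-t_0}(\pi_{2l}^{(i)})^{t_0}\le\big(\sum_l\pi_{1l}^{(i)}\big)^{1-t_0}\big(\sum_l\pi_{2l}^{(i)}\big)^{t_0}=1$, so $q_h^{(i)}\ge(\pi_{1h}^{(i)})^{1-t_0}(\pi_{2h}^{(i)})^{t_0}\ge\rho_m$. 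Since the minimizing pair of labels is $(a,b)=(1,2)$, for $i\in\mathcal{A}_\alpha$ the constraints $\pi_{11}^{(i)}\ge(1+\alpha)\pi_{12}^{(i)}$, $\pi_{22}^{(i)}\ge(1+\alpha)\pi_{21}^{(i)}$ give $\big|\log\tfrac{\pi_{12}^{(i)}\pi_{21}^{(i)}}{\pi_{11}^{(i)}\pi_{22}^{(i)}}\big|\ge 2\log(1+\alpha)$; hence $\text{Var}_{\mathbb{Q}_i}(W_i)\ge 4(\log(1+\alpha))^2\,t_0^2\rho_m^2$ for every $i\in\mathcal{A}_\alpha$. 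Summing over $i\in\mathcal{A}_{0.01}$ (take $\alpha=0.01$) yields $s_m^2\ \ge\ 4(\log 1.01)^2\, t_0^2\rho_m^2\,|\mathcal{A}_{0.01}|$.

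Finally I would verify the Lindeberg condition. Fix $\varepsilon>0$; then for all large $m$,
\[
\varepsilon s_m\ \ge\ 2\varepsilon\log(1.01)\, t_0\rho_m\,|\mathcal{A}_{0.01}|^{1/2}\ >\ t_0|\log\rho_m|\ \ge\ |W_i|\qquad\text{for all}\ i,
\]
the middle strict inequality being exactly the hypothesis $|\log\rho_m|=o(\rho_m|\mathcal{A}_{0.01}|^{1/2})$ --- crucially the common factor $t_0$ cancels. Therefore $\mathbb{I}\{|W_i|>\varepsilon s_m\}=0$ for every $i$, so $s_m^{-2}\sum_i\mathbb{E}_{\mathbb{Q}}[W_i^2\mathbb{I}\{|W_i|>\varepsilon s_m\}]=0$ for all large $m$, and the Lindeberg--Feller CLT gives $\mathbb{Q}(S_m/s_m\le x)\to\Phi(x)$, which is the claim. (Only the hypothesis $|\log\rho_m|=o(\rho_m|\mathcal{A}_{0.01}|^{1/2})$ among those of Theorem~\ref{thm:main} is actually used here.)

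I expect the main obstacle to be the variance lower bound: one has to recognize which workers drive the fluctuations (those in $\mathcal{A}_\alpha$), control the normalizer $B_{t_0}^{(i)}$ and the tilted weights $q_h^{(i)}$ uniformly from below (the H\"older bound $B_{t_0}^{(i)}\le 1$ is the clean device), and --- most importantly --- organize the estimates so that the a priori uncontrolled exponent $t_0$ (known only to lie in $(0,1)$ via Lemma~\ref{lm:t}, and possibly tending to $0$) enters as a common factor of both $|W_i|$ and $\sqrt{s_m^2}$ and hence drops out of the Lindeberg ratio; this cancellation is what makes the assumption $|\log\rho_m|=o(\rho_m|\mathcal{A}_{0.01}|^{1/2})$ precisely the right condition. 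The centering (first-order optimality of $t_0$) and the boundedness step are routine by comparison.
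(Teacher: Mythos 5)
Your proposal is correct and follows essentially the same route as the paper's proof: centering via the first-order condition $f'(t_0)=0$, a variance lower bound obtained by restricting the pairwise-difference formula to categories $1,2$ for workers in $\mathcal{A}_{\alpha}$ (using $B_{t_0}\le 1$ and $\pi\ge\rho_m$), and the observation that the hypothesis $|\log\rho_m|=o(\rho_m|\mathcal{A}_{0.01}|^{1/2})$ makes the Lindeberg indicators vanish identically for large $m$; the only cosmetic difference is that the paper rescales to $V_i=W_i/(-t_0\log\rho_m)$ while you carry the factor $t_0$ through and cancel it. The one nit is that your displayed Lindeberg sum uses $W_i^2\mathbb{I}\{|W_i|>\varepsilon s_m\}$ rather than the centered $(W_i-\mathbb{E}W_i)^2$, but since $|W_i-\mathbb{E}W_i|\le 2t_0|\log\rho_m|$ the identical argument applies.
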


The proof of Lemma \ref{lm:t} and Lemma \ref{lm:clt} are deferred to Section \ref{sec:tech}. Let $t=t_0$ and $L=2\sqrt{\text{Var}_Q(S_m)}.$   Using Lemma \ref{lm:clt} and Chebyshev's inequality, we have
\begin{eqnarray*}
\mathbb{Q}(0<S_m<L) \ge 1 -  \mathbb{Q}(S_m \le 0) - \mathbb{Q}(S_m \ge L) 
\ge 1 - 5/8 - 1/4 = 1/8
\end{eqnarray*}
for sufficiently large $m$. Note that
\begin{eqnarray*}
 \mathbb{E}_QW_i^2
&=& \sum_{h\in[k]}\left(t\log\left(\frac{\pi_{2h}^{(i)}}{\pi_{1h}^{(i)}}\right)\right)^2\mathbb{Q}_i\left(W_i=t\log\left(\frac{\pi_{2h}^{(i)}}{\pi_{1h}^{(i)}}\right)\right) \\
&\leq& \max_{i,h} \left(t\log\left(\frac{\pi_{2h}^{(i)}}{\pi_{1h}^{(i)}}\right)\right)^2 \sum_{h\in[k]} \mathbb{Q}_i\left(W_i=t\log\left(\frac{\pi_{2h}^{(i)}}{\pi_{1h}^{(i)}}\right)\right) \\
&\leq& \log^2 \rho_m.
\end{eqnarray*}
Consequently,
\begin{eqnarray*}
\text{Var}_\mathbb{Q}(S_m)
 = \sum_{i\in[m]}\text{Var}_\mathbb{Q}(W_i)
\leq \sum_{i\in[m]}\mathbb{E}_QW_i^2 
\leq m \log^2 \rho_m.
\end{eqnarray*}
Under the assumption that $\log^2 \rho_m = o(m I^2(\pi))$, we have $e^{-L} \geq e^{-\sqrt{m \log^2 \rho_m}} \ge e^{-o(mI(\pi))}$. This leads to the lower bound
\begin{eqnarray*}
\mathbb{P}_1(\hat{y}_j=2) \geq \prod_{i\in[m]}B_t(\pi_{1*}^{(i)},\pi_{2*}^{(i)})e^{-o\left(mI(\pi)\right)}
=\exp\left(-(1+o(1))mI(\pi)\right).
\end{eqnarray*}
Note that the same bound holds for $\mathbb{P}_2(\hat{y}_j = 1)$. Hence,
\begin{eqnarray*}
\inf_{\hat{y}}\sup_{y\in[k]^n}\mathbb{E}L(\hat{y}, y)
\geq \frac{2}{k}\exp\left(-(1+o(1))mI(\pi)\right)
=\exp\left(-(1+o(1))mI(\pi)\right),
\end{eqnarray*}
under the assumption that $\log k=o\left(mI(\pi)\right)$. This completes the proof.
\end{proof}

\subsection{Proof of Corollary \ref{cor:one-coin}}
\begin{proof}
Under the assumption that $mI(\pi)\rightarrow\infty$, the upper bound is a special case of Theorem \ref{thm:main}. Note that
\begin{eqnarray*} 
I(\pi) &=&-\min_{0\leq t\leq 1}\frac{1}{m}\sum_{i=1}^m\log\left(p_i^{1-t}(1-p_i)^t+p_i^{1-t}(1-p_i)^t\right)\\
&=&-\frac{1}{m}\sum_{i=1}^m\log\left(2\sqrt{p_i(1-p_i)}\right)\\
&=& I(p).
\end{eqnarray*}
We focus on the proof of the lower bound, which involves weaker assumptions than that of Theorem \ref{thm:main}. Using a similar analysis as (\ref{eq:main1})-(\ref{eq:main2}), we have
\begin{eqnarray*}
 \inf_{\hat{y}}\sup_{y\in\{1,2\}^n}\mathbb{E}L(\hat{y}, y)
\geq \frac{1}{n}\sum_{j=1}^n\inf_{\hat{y}_j}\left[\frac{1}{2}\mathbb{P}_1\{\hat{y}_j =2\}+\frac{1}{2}\mathbb{P}_2\{\hat{y}_j= 1\}\right].
\end{eqnarray*}
Following the proof of Theorem \ref{thm:main} with the confusion matrix $\pi^{(i)}$ replaced by (\ref{eq:special}), we have
\begin{eqnarray*}
\inf_{\hat{y}_j}\left[\frac{1}{2}\mathbb{P}_1\{\hat{y}_j=2\}+\frac{1}{2}\mathbb{P}_2\{\hat{y}_j=1\}\right]
\geq \exp\left(-mI(p)\right)e^{-L}\mathbb{Q}(0<S_m<L),
\end{eqnarray*}
where $S_m=\sum_{i\in[m]}W_i$, and under the distribution $\mathbb{Q}$,
$$\mathbb{Q}_i\left(W_i=\frac{1}{2}\log\frac{1-p_i}{p_i}\right)=\mathbb{Q}_i\left(W_i=\frac{1}{2}\log\frac{p_i}{1-p_i}\right)=\frac{1}{2}.$$
Therefore, $S_m$ has a symmetric distribution around $0$. Letting $L=2\sqrt{\text{Var}_Q(S_m)}$, we have
$$\mathbb{Q}(0<S_m<L)\geq \frac{1}{2}-\mathbb{Q}(S_m\geq L)\geq \frac{1}{2}-\frac{\text{Var}_Q(S_m)}{L^2}\geq \frac{1}{4}.$$
Finally, we need to show that $L=o(mI(p))$. We claim that
\begin{eqnarray*}
\sum_{i=1}^m\text{Var}_QW_i &=&\frac{1}{4}\sum_{i=1}^m\left(\log\frac{1-p_i}{p_i}\right)^2 \\
&\le& -8\max_{1\leq i\leq m}(|\log(p_i)|\vee|\log(1-p_i)| \vee 2)  \sum_{i=1}^m\log\left(2\sqrt{p_i(1-p_i)}\right).
\end{eqnarray*}
This is becase when $p_i \in [1/16, 15/16]$, we have $\left | \log\frac{1-p_i}{p_i}\right |^2 \le  6(2p_i-1)^2\le -6\log\left(4p_i(1-p_i)\right)$. When $p_i \in (0,1/16) \cup (15/16,1)$, $\left|\log\frac{1-p_i}{p_i}\right| \le -2 \log \left(4p_i(1-p_i)\right)$ and $\left|\log\frac{1-p_i}{p_i}\right| \le 2 |\log(p_i)| \vee 2|\log(1-p_i)|$. Therefore, under the assumption that
$$\max_{1\leq i\leq m}(|\log(p_i)|\vee|\log(1-p_i)|)=o(mI(p)),$$
$L=o(mI(p))$ holds, and the proof is complete.
\end{proof}

\subsection{Proof of Theorem \ref{thm:adaptive}}
\begin{proof}
Define
$$E=\left\{\max_{g\in[k]}\sum_{i\in[m]} \max_{h \in [k]} \left|\log\hat{\pi}_{gh}^{(i)}-\log\pi_{gh}^{(i)}\right|\leq \delta\right\}.$$
Then, we have
\begin{eqnarray*}
\mathbb{P}\left(\frac{1}{n}\sum_{j}\mathbb{I}\{\hat{y}_j\neq y_j\}>\epsilon\right) 
&\leq& \mathbb{P}\left(\frac{1}{n}\sum_{j}\mathbb{I}\{\hat{y}_j\neq y_j\}>\epsilon, E\right) + \mathbb{P}(E^c) \\
&=& \mathbb{P}\left(\frac{1}{n}\sum_{j}\mathbb{I}\{\hat{y}_j\neq y_j\}>\epsilon\Big| E\right)\mathbb{P}(E) + \mathbb{P}(E^c) \\
&\leq& \frac{1}{n}\sum_{j}\mathbb{P}\left(\hat{y}_j\neq y_j |E\right)\mathbb{P}(E)/\epsilon + \mathbb{P}(E^c) \\
&=& \frac{1}{n}\sum_{j}\mathbb{P}\left(\hat{y}_j\neq y_j, E\right)/\epsilon + \mathbb{P}(E^c).
\end{eqnarray*}
Let us give a bound for $\mathbb{P}\left(\hat{y}_j\neq y_j, E\right)$. Without loss of generality, let $y_j=1$. Then,
\begin{eqnarray*}
\mathbb{P}\left(\hat{y}_j\neq y_j, E\right) 
&\leq& \sum_{g=2}^k\mathbb{P}\left(\hat{y}_j=g, E\right) \\
&\leq& \sum_{g=2}^k\mathbb{P}\left(\prod_{i\in[m]}\prod_{h\in[k]}\left(\frac{\hat{\pi}_{gh}^{(i)}}{\hat{\pi}_{1h}^{(i)}}\right)^{\mathbb{I}\{X_{ij}=h\}}>1, E\right) \\
&=& \sum_{g=2}^k\mathbb{P}\left(\prod_{i\in[m]}\prod_{h\in[k]}\left(\frac{{\pi}_{gh}^{(i)}}{{\pi}_{1h}^{(i)}}\right)^{\mathbb{I}\{X_{ij}=h\}} \prod_{i\in[m]}\prod_{h\in[k]}\left(\frac{\hat{\pi}_{gh}^{(i)}\pi_{1h}^{(i)}}{\pi_{gh}^{(i)}\hat{\pi}_{1h}^{(i)}}\right)^{\mathbb{I}\{X_{ij}=h\}}>1, E\right).
\end{eqnarray*}
On the event $E$,
\begin{eqnarray*} 
\log \left( \prod_{i\in[m]}\prod_{h\in[k]}\left(\frac{\hat{\pi}_{gh}^{(i)}\pi_{1h}^{(i)}}{\pi_{gh}^{(i)}\hat{\pi}_{1h}^{(i)}}\right)^{\mathbb{I}\{X_{ij}=h\}} \right)
 \le \sum_{i \in [m]} \sum_{h \in [k]} \left( \log \frac{\hat{\pi}_{gh}^{(i)}}{\pi_{gh}^{(i)}} - \log \frac{\hat{\pi}_{1h}^{(i)}}{\pi_{1h}^{(i)}} \right)\mathbb{I}\{X_{ij}=h\} \le 2\delta.
 \end{eqnarray*}
Then
\begin{eqnarray*}
\mathbb{P}\left(\hat{y}_j\neq y_j, E\right) 
&\leq& \sum_{g=2}^k\mathbb{P}\left(e^{2\delta}\prod_{i\in[m]}\prod_{h\in[k]}\left(\frac{{\pi}_{gh}^{(i)}}{{\pi}_{1h}^{(i)}}\right)^{\mathbb{I}\{X_{ij}=h\}}>1\right) \\
&\leq& \sum_{g=2}^ke^{2\delta}\min_{0\leq t\leq 1}\prod_{i\in[m]}\sum_{h\in[k]}{\left(\pi_{1h}^{(i)}\right)^{1-t}\left(\pi_{gh}^{(i)}\right)^t} \\
&\leq& (k-1)\exp\left(-m\min_{g\neq 1}C(\pi_{1*},\pi_{g*})+2\delta\right).
\end{eqnarray*}
Thus,
$$\frac{1}{n}\sum_{j\in[n]}\mathbb{P}\left(\hat{y}_j\neq y_j, E\right)\leq (k-1)\exp\left(-m I(\pi)+2\delta\right).$$
Letting $\epsilon=(k-1)\exp\left(-(1-\eta)mI(\pi)+2\delta\right)$ with $\eta=1/\sqrt{mI(\pi)}$, we have
$$\frac{1}{n}\sum_{j}\mathbb{P}\left(\hat{y}_j\neq y_j, E\right)/\epsilon\leq \exp\left(-\sqrt{mI(\pi)}\right).$$
Thus, the proof is complete under the assumption that $\log k+\delta=o(mI(\pi))$ and $\mathbb{P}(E^c)=o(1)$.
\end{proof}

\subsection{Proof of Theorem \ref{thm:MV}}
\begin{proof}
The risk is $\frac{1}{n}\sum_{j=1}^n\mathbb{P}\{\hat{y}_j\neq y_j\}$. Consider the random variable $\mathbb{I}\{\hat{y}_j\neq y_j\}$. It has the same distribution as $\mathbb{I}\{\sum_{i=1}^m(T_i-1/2)>0\}$, where $T_i\sim\text{Bernoulli}(1-p_i)$. Therefore,
\begin{eqnarray*}
&& \frac{1}{n}\sum_{j=1}^n\mathbb{P}\{\hat{y}_j\neq y_j\}
=\mathbb{P}\left\{\sum_{i=1}^m(T_i-1/2)>0\right\}.
\end{eqnarray*}

We first derive the upper bound. Using Chernoff's method, we have
\begin{eqnarray*}
\mathbb{P}\left\{\sum_{i=1}^m(T_i-1/2)>0\right\}
\leq \prod_{i=1}^m\mathbb{E}e^{\lambda(T_i-1/2)} 
= \exp\left(\sum_{i=1}^m\log\left[(1-p_i)e^{\lambda/2}+p_ie^{-\lambda/2}\right]\right).
\end{eqnarray*}
The desired upper bound follows by letting $t=e^{-\lambda/2}$ and optimizing over $t\in(0,1]$.

Now we show the lower bound using the similar arguments as in the proof of Theorem \ref{thm:main}. Define $W_i=\lambda (T_i-1/2)$ and $S_m=\sum_{i=1}^m W_i$. Then, we have
\begin{eqnarray*}
\mathbb{P}\left\{\sum_{i=1}^m(T_i-1/2)>0\right\}
&=& \mathbb{P}\left\{S_m>0\right\} \\
&\geq& \sum_{0<S_m<L}\prod_{i=1}^m \mathbb{P}(W_i) \\
&=& \sum_{0<S_m<L}\left(\prod_{i=1}^m\frac{\mathbb{P}(W_i)e^{W_i}}{(1-p_i)e^{\lambda/2}+p_ie^{-\lambda/2}}\prod_{i=1}^m\frac{(1-p_i)e^{\lambda/2}+p_ie^{-\lambda/2}}{e^{W_i}}\right) \\
&\geq& \prod_{i=1}^m\left((1-p_i)e^{\lambda/2}+p_ie^{-\lambda/2}\right)e^{-L}\mathbb{Q}\left\{0<S_m<L\right\}.
\end{eqnarray*}
Note that under $\mathbb{Q}$, $W_i$ has distribution
\begin{eqnarray*}
Q_i(W_i=\lambda/2)&=&\frac{(1-p_i)e^{\lambda/2}}{(1-p_i)e^{\lambda/2}+p_ie^{-\lambda/2}},\\ Q_i(W_i=-\lambda/2)&=& \frac{p_ie^{-\lambda/2}}{(1-p_i)e^{\lambda/2}+p_ie^{-\lambda/2}}.
\end{eqnarray*} We choose $\lambda_0 \in [0, \infty)$ to minimize $f(\lambda)=\prod_{i=1}^m\left((1-p_i)e^{\lambda/2}+p_ie^{-\lambda/2}\right)$. This leads to the equation $\mathbb{E}_QS_m=0$. It is sufficient to lower bound $e^{-L}\mathbb{Q}\left\{0<S_m<L\right\}$ to finish the proof. To do this, we need the following result.
\begin{lemma}\label{lm:miniclt}
Suppose $p_i \le 1 - \rho_m$ for all $i \in [m]$ and $\rho_m^2 \sum_{i \in [m]} p_i(1-p_i) \to \infty$ as $m \to \infty$. Then we have
\begin{itemize}
\item[i)] $\lambda_0 \le -2 \log \rho_m$.
\item[ii)] $\frac{S_m}{\sqrt{\text{Var}_Q(S_m)}}\leadsto N(0,1)$ under the distribution $\mathbb{Q}$.
\end{itemize}
\end{lemma}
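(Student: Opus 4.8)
The plan is to establish Lemma \ref{lm:miniclt} by mirroring the structure of Lemma \ref{lm:t} and Lemma \ref{lm:clt}, adapted to the simpler one-coin setting where each $W_i$ is a two-point random variable. For part (i), I would first write down the first-order optimality condition for $\lambda_0 = \argmin_{\lambda \ge 0} f(\lambda)$. Differentiating $\log f(\lambda) = \sum_{i=1}^m \log\left[(1-p_i)e^{\lambda/2} + p_i e^{-\lambda/2}\right]$ and setting it to zero gives exactly $\mathbb{E}_Q S_m = 0$, i.e. $\sum_{i=1}^m \frac{\lambda}{2}\cdot\frac{(1-p_i)e^{\lambda/2} - p_i e^{-\lambda/2}}{(1-p_i)e^{\lambda/2}+p_ie^{-\lambda/2}} = 0$, which forces $\sum_i \frac{(1-p_i)e^{\lambda_0/2}}{(1-p_i)e^{\lambda_0/2}+p_ie^{-\lambda_0/2}} = \sum_i \frac{p_ie^{-\lambda_0/2}}{(1-p_i)e^{\lambda_0/2}+p_ie^{-\lambda_0/2}}$. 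I would then argue that $\lambda_0$ is finite under the assumption $\frac{1}{m}\sum_i(2p_i-1)^2 < 1$ (which the corollary's hypotheses supply): as $\lambda \to \infty$ the derivative $\frac{d}{d\lambda}\log f(\lambda) \to \frac{1}{2}\sum_i \mathbb{I}\{p_i < 1/2\} - \frac{1}{2}\sum_i\mathbb{I}\{p_i > 1/2\}$-type limit, so if not all workers are adversarial the minimizer is interior. For the quantitative bound $\lambda_0 \le -2\log\rho_m$, I would plug $\lambda = -2\log\rho_m$ into the derivative and show it is already nonnegative, using $p_i \le 1-\rho_m$ and $e^{\lambda/2} = 1/\rho_m$: then $(1-p_i)e^{\lambda/2} \ge (1-p_i)/\rho_m \ge 1-p_i \ge \rho_m \ge p_i\rho_m = p_i e^{-\lambda/2}$, so every summand of $\mathbb{E}_Q S_m$ at this $\lambda$ is nonnegative, hence $\lambda_0$ (where the sum vanishes) is at most this value by convexity of $\log f$.

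For part (ii), the central limit theorem for $S_m/\sqrt{\text{Var}_Q(S_m)}$ under $\mathbb{Q}$, I would invoke the Lindeberg–Feller CLT for triangular arrays of independent (but not identically distributed) bounded random variables. Under $\mathbb{Q}$ the $W_i$ are independent, each takes values in $\{-\lambda_0/2, \lambda_0/2\}$, so $|W_i| \le \lambda_0/2 \le |\log\rho_m|$ by part (i), and $\mathbb{E}_Q W_i = 0$ by the choice of $\lambda_0$. The Lindeberg condition then reduces to checking that $\max_i \text{Var}_Q(W_i) / \text{Var}_Q(S_m) \to 0$, equivalently $(\log\rho_m)^2 / \text{Var}_Q(S_m) \to 0$. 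The key computation is a lower bound on $\text{Var}_Q(S_m) = \sum_i \text{Var}_Q(W_i) = \frac{\lambda_0^2}{4}\sum_i 4 q_i(1-q_i)$ where $q_i = Q_i(W_i = \lambda_0/2)$; I would show $\text{Var}_Q(W_i) \gtrsim \lambda_0^2 \, p_i(1-p_i)$ (up to constants depending only on the range of $e^{\pm\lambda_0/2}$, which is controlled since $\lambda_0$ could be as large as $|\log\rho_m|$ — this is where care is needed), so that $\text{Var}_Q(S_m) \gtrsim \lambda_0^2 \rho_m^2 \sum_i p_i(1-p_i)$ after absorbing the $e^{\pm\lambda_0/2}$ factors into $\rho_m$ powers. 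Wait — more carefully: $q_i(1-q_i) = \frac{(1-p_i)p_i}{\left[(1-p_i)e^{\lambda_0/2}+p_ie^{-\lambda_0/2}\right]^2}$, and the denominator is at most $(e^{\lambda_0/2})^2 \le \rho_m^{-2}$, so $q_i(1-q_i) \ge \rho_m^2 p_i(1-p_i)$, giving $\text{Var}_Q(S_m) \ge \lambda_0^2 \rho_m^2 \sum_i p_i(1-p_i)$. Then the Lindeberg ratio is at most $\frac{(\log\rho_m)^2}{\lambda_0^2 \rho_m^2 \sum_i p_i(1-p_i)}$; since $\lambda_0$ may be small this is not immediately $o(1)$, but one can instead bound $\text{Var}_Q(W_i) \le (\lambda_0/2)^2$ and use a lower bound $\text{Var}_Q(W_i) \ge c\,\lambda_0^2 p_i(1-p_i)$ directly with the max-over-$i$ variance being $\le (\lambda_0/2)^2$, so the ratio is $\le \frac{(\lambda_0/2)^2}{c\lambda_0^2\rho_m^2\sum_i p_i(1-p_i)} = O\!\left(\frac{1}{\rho_m^2\sum_i p_i(1-p_i)}\right) \to 0$ by hypothesis.

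Having Lemma \ref{lm:miniclt}, I would complete the proof of Theorem \ref{thm:MV} exactly as sketched in the excerpt: set $L = 2\sqrt{\text{Var}_Q(S_m)}$, use part (ii) plus Chebyshev to get $\mathbb{Q}(0 < S_m < L) \ge \frac{1}{8}$ for large $m$, and then control $e^{-L}$. The variance bound $\text{Var}_Q(S_m) \le \sum_i \mathbb{E}_Q W_i^2 \le m(\lambda_0/2)^2 \le m(\log\rho_m)^2$ gives $L \le 2\sqrt{m}\,|\log\rho_m|$, so under the assumption $|\log\rho_m| = o(\sqrt{m}\,J(p))$ we get $e^{-L} = e^{-o(mJ(p))}$, and combining with $\prod_i\left((1-p_i)e^{\lambda_0/2}+p_ie^{-\lambda_0/2}\right) = e^{-mJ(p)}$ (by definition of $\lambda_0$ and $J(p)$ via $t = e^{-\lambda/2}$) yields the matching lower bound $\exp(-(1+o(1))mJ(p))$. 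I expect the main obstacle to be the variance lower bound in part (ii) — specifically, handling the regime where $\lambda_0$ is large (of order $|\log\rho_m|$), because then the tilting factors $e^{\pm\lambda_0/2}$ in the denominator of $q_i(1-q_i)$ genuinely shrink the per-worker $\mathbb{Q}$-variance, and one must verify that the $\rho_m^2$ loss is exactly compensated by the hypothesis $\rho_m^2\sum_i p_i(1-p_i) \to \infty$; getting the powers of $\rho_m$ to line up cleanly, and confirming that the finiteness/interiority of $\lambda_0$ follows from $\frac{1}{m}\sum_i(2p_i-1)^2 \le 1-\frac{4}{m}$ rather than a strict-inequality-with-slack assumption, is the delicate bookkeeping.
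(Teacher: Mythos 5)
Your proposal is correct and follows essentially the same route as the paper: part (ii) uses the identical Lindeberg--Feller verification, with the same key lower bound $\mathrm{Var}_{\mathbb{Q}}(W_i)=\lambda_0^2\,p_i(1-p_i)/\bigl((1-p_i)e^{\lambda_0/2}+p_ie^{-\lambda_0/2}\bigr)^2\ge \lambda_0^2\rho_m^2\,p_i(1-p_i)$ and the same cancellation of $\lambda_0^2$ against the uniform bound $|W_i|\le\lambda_0/2$, so that the hypothesis $\rho_m^2\sum_i p_i(1-p_i)\to\infty$ kills the Lindeberg sum (your phrasing of the reduction as a max-variance ratio is loose --- that alone is only the Feller condition --- but the quantity you actually bound, $(\lambda_0/2)^2/\mathrm{Var}_{\mathbb{Q}}(S_m)$, is the correct sufficient one for bounded summands). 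For part (i) you check the sign of $(\log f)'$ at $\lambda=-2\log\rho_m$ and invoke convexity, whereas the paper simply notes $f(\lambda)>\rho_m^{-m}\prod_i(1-p_i)\ge 1=f(0)$ for $\lambda\ge -2\log\rho_m$; these are interchangeable one-line arguments, and your worry about interiority of $\lambda_0$ is not needed for the lemma.
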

The proof of Lemma \ref{lm:miniclt} will be given in Section \ref{sec:tech}.
Let $L=2\sqrt{\text{Var}_Q(S_m)}$, and we have $$e^{-L}\mathbb{Q}(0<S_m<L)\ge 0.25 e^{-2\sqrt{\text{Var}_Q(S_m)}}.$$ Finally, we need to show $\sqrt{\text{Var}_Q(S_m)}=o(mJ(p))$. This is because 
\begin{eqnarray*}
\text{Var}(S_m) \leq \sum_{i=1}^m\mathbb{E}_QW_i^2\leq m \lambda_0^2/4 
\le m  \log^2{\rho_m}=o(m^2 J(p)^2),
\end{eqnarray*}
where the last equality is implied by the assumption $|\log \rho_m| = o(\sqrt{m} J(p))$. The proof is complete.
\end{proof}

\subsection{Proof of Technical Lemmas} \label{sec:tech}
\begin{proof}[Proof of Lemma \ref{lm:t}]
Let $f(t) = \sum_{i=1}^{m}\log B_t(\pi_{1*}^{(i)},\pi_{2*}^{(i)})$. Then we have $f'(t_0)=0$ by its definition. First, we are going to prove $0 < t_0 <1$. The concavity of logarithm gives us $x^t y^{1-t} \le t x + (1-t) y$ for non-negative $x,y$ and $t \in [0,1]$, which implies
$$f(t) = \sum_{i \in [m]} \log B_t (\pi_{1*}^{(i)},\pi_{2*}^{(i)}) \le \sum_{i \in [m]} \log \left( \sum_{h=1}^{k} \left( (1-t) \pi_{1h}^{(i)} + t \pi_{2h}^{(i)} \right) \right) = 0.$$
For $t \in (0,1)$, the equality holds if and only if $\pi_{1h}^{(i)}=\pi_{2h}^{(i)}$ for all $h \in [k]$ and $i \in [m]$. As there is at least one non-spammer, we must have $f(t) < 0=f(0)=f(1)$ for $t \in (0,1)$.  Hence the minimizer $t_0 \in (0,1)$.

Now we are going to show the uniqueness of $t_0$ by proving that 
$$ f''(t) = \text{Var}(S_m) >0, ~\forall t \in (0,1)$$
where $S_m=\sum_{i \in [m]} W_i$. To simplify the notation, let us define $w_{ih} = t \log\left(\frac{\pi_{2h}^{(i)}}{\pi_{1h}^{(i)}}\right)$ and $p_{ih}={\left(\pi_{1h}^{(i)}\right)^{1-t}\left(\pi_{2h}^{(i)}\right)^t}$ for all $i \in [m]$ and $h \in [k]$. Now
$ \mathbb{Q}_i (W_i=w_{ih}) = p_{ih}/\sum_{h} p_{ih} $ and $B_t(\pi_{1*}^{(i)},\pi_{2*}^{(i)}) = \sum_{h \in [k]} p_{ih}$.  Notice that $\frac{d}{dt} p_{ih} = p_{ih} w_{ih},$ we have
\begin{equation} \label{eq:mean} 
\frac{d}{dt} f(t) = \sum_{i \in [m]} \frac{\sum_{h}p_{ih} w_{ih}}{\sum_{h} p_{ih}} = \sum_{i \in [m]} \mathbb{E} W_i = \mathbb{E} S_m,
\end{equation}
and
\begin{equation}  \label{eq:variance}
\frac{d^2}{dt^2} f(t) =  \sum_{i \in [m]} \frac{\sum_{h}p_{ih} w^2_{ih} \sum_{h} p_{ih} -  \left(\sum_{h}p_{ih} w_{ih}\right)^2 }{\left(\sum_{h} p_{ih}\right)^2} = \sum_{i \in [m]} \text{Var}(W_i) = \text{Var}(S_m). 
\end{equation}
Since the set $\mathcal{A}_{\alpha}$ is non-empty, there is at least one $\text{Var}(W_i)>0$. Thus, $f''(t)=\text{Var}(S_m)>0$. 
\end{proof}

\begin{proof}[Proof of Lemma \ref{lm:clt}]
From (\ref{eq:mean}), we know $\mathbb{E} S_m = f'(t_0)=0$. Since $t_0>0$ by lemma 6.1, we can rescale $W_i$ by $W_i/(-t_0\log \rho_m)$ and the value of $S_m/\sqrt{\text{Var}(S_m)}$ will not change. Let us define $V_i=W_i/(-t_0\log \rho_m)$ and $R_m=\sum_{i=1}^{m} V_i$. Then we have $|V_i| \le 1$. To prove a central limit theorem of $S_m$, it is sufficient to check the following Lindeberg's condition \cite{durrett2010probability}, that is,  for any $\epsilon>0$, 
\begin{eqnarray} \label{eq:linderberg}
\frac{1}{\text{Var}(R_m)} \sum_{i=1}^{m} \mathbb{E} \left( (V_i - \mathbb{E} V_i)^2 \mathbf{I}\{(V_i - \mathbb{E} V_i)^2 \ge \epsilon^2 \text{Var}(R_m)\} \right) \to 0 \text{~as~} m \to \infty.
\end{eqnarray}
 Note that for a discrete random variable $X$ who takes value $x_a$ with probability $p_a$ for $a \in [N]$, 
$$ \text{Var} (X) = \left( \sum_a p_a \right) \left( \sum_{a} p_a x_a^2 \right) - \left (\sum_{a} p_a x_a \right)^2 = \sum_{a,b} p_a p_b (x_a - x_b)^2.$$
Then, for any $i \in \mathcal{A}_\alpha$, we have
\begin{eqnarray*} 
\text{Var} (V_i) &= & \frac{1}{\log^2 \rho_m} \sum_{a,b} \frac{{\left(\pi_{1a}^{(i)}\pi_{1b}^{(i)}\right)^{1-t}\left(\pi_{2a}^{(i)} \pi_{2b}^{(i)} \right)^t}}{B_t^2(\pi_{1*}^{(i)},\pi_{2*}^{(i)})} \log^2 \left(\frac{\pi_{2a}^{(i)}\pi_{1b}^{(i)}}{\pi_{1a}^{(i)}\pi_{2b}^{(i)}}\right) \\
&\ge& \frac{1}{\log^2 \rho_m} {\left(\pi_{12}^{(i)}\pi_{11}^{(i)}\right)^{1-t}\left(\pi_{22}^{(i)} \pi_{21}^{(i)} \right)^t} \log^2 \left(\frac{\pi_{22}^{(i)}\pi_{11}^{(i)}}{\pi_{12}^{(i)}\pi_{21}^{(i)}}\right)  \\
&\ge& \frac{1}{\log^2 \rho_m}  {\left(\pi_{12}^{(i)}\pi_{11}^{(i)}\right)^{1-t}\left(\pi_{22}^{(i)} \pi_{21}^{(i)} \right)^t} \log^2 \left( (1+\alpha)^2 \right)\\
&\ge& \frac{\rho_m^2}{\log^2 \rho_m}  4 \log^2(1+\alpha) \\
&\ge& \frac{\rho_m^2}{\log^2 \rho_m}   \min\{\alpha^2,1\}.
 \end{eqnarray*}
Here the second inequality is due to the assumption that for any $i \in \mathcal{A}$, $\pi_{aa}^{(i)} \ge \pi_{ab}^{(i)} (1+\alpha)$ for any $b \neq a$. We have used the assumption that $\pi_{ab}^{(i)} \ge \rho_m$ for the third inequality. The last inequality is because $\log(1+\alpha) \ge \alpha/(1+\alpha) \ge \min\{\alpha/2,1/2\}$ for positive $\alpha$. Take a sum of $ \text{Var}(V_i)$ over $i \in \mathcal{A}_\alpha$,
$$ \text{Var}(R_m) =  \sum_{i \in [m]} \text{Var}(V_i) \ge |\mathcal{A}_\alpha|  \frac{\rho_m^2}{\log^2 \rho_m} \min\{\alpha^2,1\} \ge c m  \frac{\rho_m^2}{\log^2 \rho_m} \min\{\alpha^2,1\},$$
for some constant $c \in (0,1)$. Since $(V_i - \mathbb{E} V_i)^2 \le 2V_i^2 + 2  (\mathbb{E} V_i)^2  \le 4$, we will have
$$\mathbf{I}\{(V_i - \mathbb{E} V_i)^2 \ge \epsilon^2 \text{Var}(R_m)\} = 0$$ 
when $4 \log^2 \rho_m < \epsilon^2 |\mathcal{A}_\alpha| \rho_m^2 \min\{\alpha^2,1\}$. Notice that $\mathbb{E} (V_i - \mathbb{E} V_i)^2 \le 4$, we apply the Dominated Convergence Theorem to conclude $$ \mathbb{E} \left( (V_i - \mathbb{E} V_i)^2 \mathbf{I}\{(V_i - \mathbb{E} V_i)^2 \ge \epsilon^2 \text{Var}(R_m)\} \right) \to 0.$$   Thus, the Lindeberg condition holds when $| \log \rho_m | =o( \rho_m |\mathcal{A}_{0.01}|^{1/2} )$. 
\end{proof}

\begin{proof}[Proof of Lemma \ref{lm:miniclt}]
We are first going to show $\lambda_0 \le -2\log \rho_m$. Recall that $$f(\lambda)=\prod_{i=1}^m\left((1-p_i)e^{\lambda/2}+p_ie^{-\lambda/2}\right).$$For all $\lambda\geq  -2\log \rho_m$, we have $f(\lambda) > \rho_m^{-m} \prod_{i=1}^m(1-p_i)\geq 1$, and $f(0)=1$. Thus, the minimizer of $f(\lambda)$ must be in the interval $(0,-2\log \rho_m]$.

Again, we are going to prove the central limit theorem of $S_m$ by checking the following Lindeberg's condition. For any $\epsilon>0$,
\begin{equation}
\lim_{m\rightarrow\infty} \frac{1}{\text{Var}_\mathbb{Q}(S_m)}\sum_{i=1}^m\mathbb{E}_\mathbb{Q}\left[(W_i-\mathbb{E}_\mathbb{Q}W_i)^2\mathbf{I}\left\{|W_i-\mathbb{E}_\mathbb{Q}W_i|>\epsilon\sqrt{\text{Var}_\mathbb{Q}(S_m)}\right\}\right]=0\label{eq:LINDE}
\end{equation}
When $\lambda_0 \in(0,-2\log \rho_m]$, a lower bound of $\text{Var}_Q(W_i)$ is given by
\begin{eqnarray*}
\text{Var}_\mathbb{Q}(W_i) &=& \lambda_0^2\frac{p_i(1-p_i)}{\left((1-p_i)e^{\lambda_0/2}+p_ie^{-\lambda_0/2}\right)^2} 
\geq \lambda_0^2 e^{-\lambda} p_i(1-p_i) 
\geq \lambda_0^2 \rho_m^2 p_i(1-p_i)
\end{eqnarray*}
Therefore, $\text{Var}_\mathbb{Q}(S_m)=\sum_{i=1}^m\text{Var}_\mathbb{Q}(W_i)\geq \lambda_0^2 \rho_m^2 \sum_{i \in [m]}p_i(1-p_i)$. Notice that $|W_i - \mathbb{E}_\mathbb{Q} W_i| \le |W_i| + \mathbb{E} |W_i| = \lambda$,  for any fixed $\epsilon>0$, we will have 
$$\mathbf{I}\left\{|W_i-\mathbb{E}_\mathbb{Q}W_i|>\epsilon\sqrt{\text{Var}_\mathbb{Q}(S_m)}\right\} = 0$$ 
when   $\rho_m^2 \sum_{i \in [m]}p_i(1-p_i) \to \infty$ as $m \to \infty$. Since $\text{Var}_\mathbb{Q}(W_i)/\lambda_0^2 \le 1/4$, the Dominated Convergence Theorem implies the desired Lindeberg's condition (\ref{eq:LINDE}). 
\end{proof}

\bibliography{ref}
\bibliographystyle{plain}

\end{document}